\documentclass[sigconf,nonacm]{acmart}

\usepackage{amsmath}
\usepackage{algorithm,algorithmic}
\usepackage{booktabs}
\usepackage{listings,xcolor}
\usepackage{enumitem}
\usepackage{stmaryrd}

\newtheorem{theorem}{Theorem}[section]

\newtheorem{corollary}[theorem]{Corollary}

\newtheorem{definition}[theorem]{Definition}

\newcommand{\MM}{\mathcal{MM}}
\newcommand{\Lib}{\mathcal{W}}
\newcommand{\Goal}{\mathcal{G}}
\newcommand{\phCOM}{\textsc{Compute}}
\newcommand{\phREQ}{\textsc{Require}}
\newcommand{\phEXE}{\textsc{Execute}}
\newcommand{\phCAL}{\textsc{Call}}

\AtBeginDocument{%
  \providecommand\BibTeX{{\normalfont B\kern-0.5em{\scshape i\kern-0.25em b}\kern-0.8em\TeX}}}

\copyrightyear{2026}
\acmYear{2026}
\acmDOI{}

\begin{document}
\microtypesetup{expansion=false,protrusion=false}
\tolerance=1500
\hbadness=1500
\emergencystretch=2em

\title{Context: Proactive Goal-Directed Intelligence\\
via Composable Sandboxed Programs,\\
Declarative Wiring, and Structured Interaction}

\author{Gregory Magarshak}
\email{gmagarshak@faculty.ienyc.edu}

\affiliation{%
  \institution{Qbix, Inc.\ \& Intercoin, Inc.}
  \city{New York}
  \country{USA}
}
\additionalaffiliation{%
  \institution{IE University NYC}
  \country{USA}
}

\begin{abstract}
We present \textbf{Context}, the intelligence layer of the Magarshak Architecture,
which replaces reactive query-response chatbots with proactive goal-directed agents
that advance shared tasks without waiting for user prompts.
The architecture rests on three mutually reinforcing mechanisms.
\emph{Write-time context assembly} precomputes enriched typed attributes via Groker
agents~\cite{magarshak2026grokers}, assembling interaction context from a
maintained denormalization index as a deterministic pure function of graph state;
context blocks are byte-identical across turns between semantic changes, enabling
near-100\% KV-cache reuse.
\emph{Composable sandboxed wisdom programs} form a governed library of
LM-generated imperative programs declaratively wired to goal types via typed stream
relations, composed via phase ordering, evolved through fitness-based selection,
and executed at interaction time without further LM calls.
\emph{Proactive goal stream state machines} drive conversations toward terminal
states by inspecting graph state and emitting structured interaction
content---option arrays, governance affordances, clarification prompts---without
awaiting user input.
We prove six formal results: (1)~the \emph{Context Stability Theorem}, bounding
per-turn LM cost as a function of semantic change rate; (2)~the \emph{Program
Composition Correctness Theorem}, establishing that phase-ordered wisdom program
pipelines preserve individual correctness; (3)~the \emph{Declarative Wiring
Soundness Theorem}, establishing complete and sound event routing; (4)~the
\emph{Proactive Dominance Theorem}, proving that proactive agents weakly dominate
reactive agents on expected turns-to-terminal-state; (5)~the \emph{Coordination
Overhead Elimination Theorem} and \emph{Quality Preservation Theorem}, establishing
that proactive agents are Pareto improvements in multi-participant goal chats---fewer
turns, equal or higher artifact quality; and (6)~the \emph{Cross-Platform Vote
Consistency Theorem}, guaranteeing governance consistency across Telegram, email,
web, and mobile.
The architecture is implemented in the open-source Qbix / Safebox / Safebots stack.
\end{abstract}

\begin{CCSXML}
<ccs2012>
 <concept>
  <concept_id>10003120.10003121.10003129</concept_id>
  <concept_desc>Human-centered computing~Collaborative and social computing systems and tools</concept_desc>
  <concept_significance>500</concept_significance>
 </concept>
 <concept>
  <concept_id>10010147.10010178.10010179</concept_id>
  <concept_desc>Computing methodologies~Natural language processing</concept_desc>
  <concept_significance>300</concept_significance>
 </concept>
 <concept>
  <concept_id>10010147.10010178.10010187</concept_id>
  <concept_desc>Computing methodologies~Knowledge representation and reasoning</concept_desc>
  <concept_significance>300</concept_significance>
 </concept>
</ccs2012>
\end{CCSXML}

\ccsdesc[500]{Human-centered computing~Collaborative and social computing systems and tools}
\ccsdesc[300]{Computing methodologies~Natural language processing}
\ccsdesc[300]{Computing methodologies~Knowledge representation and reasoning}

\keywords{proactive dialogue, goal-directed agents, wisdom library, sandboxed programs,
organizational efficiency, knowledge graphs, KV-cache optimization, cross-platform governance}

\maketitle

\section{Introduction}
\label{sec:intro}

Every deployed conversational AI system today is fundamentally reactive.
It receives a message and emits a response.
For genuinely open-ended questions this is appropriate; for the large majority of
goal-directed interactions---building a software capability, reviewing a document,
resolving a support ticket, governing a shared artifact---the required next action
is often \emph{determinable from prior graph state without any new user input at all}.

Reactive agents in multi-participant goal chats impose a structural inefficiency
with no natural corrective: a large fraction of turns are \emph{coordination
turns}---turns spent establishing the current state, identifying blockers, and
deciding who acts next.
These turns produce no progress toward the terminal state; they are pure overhead.
We formalize this and prove that proactive agents eliminate it structurally.

Three specific primitives are absent from current architectures:
\emph{write-time context}~\cite{magarshak2026grokers},
\emph{composable sandboxed programs}, and
\emph{proactive state machine intelligence}.
This paper introduces all three as an integrated architecture and proves formal
properties of each.

\subsection*{Positioning in the Magarshak Architecture}

This paper is the third in a series.
Magarshak~\cite{magarshak2026mm} introduces the Magarshak Machine ($\MM$), the
SPACER substrate: append-only streams, policy governance, five-phase action
execution ($\phCOM\to\phREQ\to\phEXE$), and the bidirectional relation index.
Magarshak~\cite{magarshak2026grokers} introduces Grokers, the write-time
comprehension layer: bottom-up inductive enrichment, the Byte-Identity Theorem,
and Accumulation Monotonicity.
The present paper introduces the \textbf{Context} intelligence layer: proactive
goal-directed agents, wisdom library composition and declarative wiring,
cross-platform rendering, and organizational efficiency theorems.
\[
  \underbrace{\MM}_{\text{substrate}}
  \;\to\;
  \underbrace{\text{Grokers}}_{\text{comprehension}}
  \;\to\;
  \underbrace{\text{Context}}_{\text{intelligence}}
\]

\subsection*{Contributions}
\begin{enumerate}[leftmargin=*,label=(\arabic*)]
\item Formal model of the Context architecture (\S\ref{sec:model}).
\item Wisdom library composition algebra with correctness-preservation theorem
      (\S\ref{sec:wisdom}).
\item Declarative wiring model with soundness theorem (\S\ref{sec:wiring}).
\item Proactive state machine framework with Proactive Dominance Theorem
      (\S\ref{sec:proactive}).
\item Organizational efficiency theorems: Coordination Overhead Elimination and
      Quality Preservation (\S\ref{sec:org}).
\item Cross-platform vote consistency theorem (\S\ref{sec:platform}).
\item Dual-traversal Hierarchy-Cache Correspondence Theorem (\S\ref{sec:dual}).
\end{enumerate}

\section{Related Work}
\label{sec:related}

\textbf{Task-oriented dialogue.}
POMDP-based systems~\cite{young2013pomdp} and neural dialogue
models~\cite{henderson2015machine,budzianowski2019hello} model conversation as
belief-state tracking followed by act selection.
They do not address write-time context precomputation, governed imperative programs,
or organizational efficiency in multi-participant settings.
Context inherits the state machine framing and extends it with deterministic
graph-derived context, a growing wisdom library replacing learned policies, and
proactive advancement without user prompts.

\textbf{Proactive dialogue.}
Proactive systems have been studied in
recommendation~\cite{deng2023survey} and knowledge-grounded
conversation~\cite{wu2019proactive}.
These define proactivity as topic introduction or recommendation surfacing.
Context defines proactivity at the structural level: a bot acts when a state machine
advancement condition is satisfied by graph state, correct by construction rather
than approximately correct by training.

\textbf{Multi-agent collaboration.}
AutoGen~\cite{wu2024autogen}, MetaGPT~\cite{hong2023metagpt}, and
CAMEL~\cite{li2023camel} focus on agent-to-agent coordination.
Context is concerned with the organizational efficiency of \emph{human-AI-human}
interactions: the fraction of turns spent on coordination versus productive progress,
and how the wisdom library and proactive state machine reduce that fraction toward zero.

\textbf{RAG.}
RAG systems~\cite{lewis2020rag,edge2024graphrag} address knowledge access at query
time.
Context replaces query-time retrieval with write-time enrichment and deterministic
graph reads; the key formal distinction is the byte-identity property of
deterministically assembled context blocks, enabling near-100\% KV-cache reuse
not achievable by any RAG system.

\textbf{LM-generated programs.}
Using LMs to generate executable code is well-studied~\cite{chen2021codex,austin2021program}.
The wisdom library adds: governance (programs reviewed before activation);
fitness-based evolutionary selection; the read-only sandbox contract; and
declarative phase assignment via typed stream relations.

\textbf{Organizational AI.}
Empirical studies~\cite{noy2023experimental,brynjolfsson2023generative} measure AI
productivity on individual tasks but do not formalize multi-participant coordination
overhead or prove bounds on proactivity's effect.
Section~\ref{sec:org} provides the first formal model and theorems on this question.

\section{Formal Model}
\label{sec:model}

We build on the typed stream graph of~\cite{magarshak2026grokers} and the SPACER
operational semantics of~\cite{magarshak2026mm}.

\begin{definition}[Goal Stream]
\label{def:goal}
\sloppy
A \emph{goal stream} $\Goal=(T,Q,q_0,F,\delta,\Lambda,\Lib_\Goal,\Pi_\Goal)$
is a tuple where:
$T$ is a stream type;
$Q$ a finite state set;
$q_0\in Q$ the initial state;
$F\subseteq Q$ the terminal states;
$\delta:Q\times\Sigma_{\mathit{attr}}\to Q$ the transition function, where
$\Sigma_{\mathit{attr}}=\{(\mathit{field},\mathit{op},\mathit{val})\mid
\mathit{field}\in K,\;\mathit{op}\in\{=,{\neq},{>},{\geq},{<},{\leq}\},
\;\mathit{val}\in A\}$
is the set of typed attribute conditions;
$\Lambda:Q\to\mathcal{P}(\mathit{InputMode})$ the per-state input mode function;
$\Lib_\Goal$ the wisdom library (Def.~\ref{def:library}); and
$\Pi_\Goal$ the proactive advancement conditions (Def.~\ref{def:proactive}).
\end{definition}

\begin{definition}[Wisdom Library]
\label{def:library}
A \emph{wisdom library} $\Lib=\{p_i\}$ is a finite set of \emph{wisdom programs},
each $p_i=(n_i,\phi_i,I_i,O_i,f_i,\ell_i)$ where
$n_i$ is a name; $\phi_i\in\Phi$ an execution phase;
$I_i,O_i$ input/output schemas; $f_i\in[0,1]$ fitness; and
$\ell_i$ the imperative program text whose denotation
$\llbracket\ell_i\rrbracket:I_i\to O_i$ is the function computed by $\ell_i$.
We write $p_i(x)$ as shorthand for $\llbracket\ell_i\rrbracket(x)$.
Programs execute in the Safebox sandbox: reads from a pre-loaded immutable input
(no live DB queries, corresponding to SPACER's $\phCOM$); writes only via proposal
accumulation (no direct writes, corresponding to $\phREQ$);
time $\leq50\,$ms; memory $\leq64\,$MB; no network except named Protocols.
\end{definition}

\begin{definition}[Execution Phase Algebra]
\label{def:phases}
Phases $\Phi=\{\mathsf{pre},\mathsf{ctx},\mathsf{post},\mathsf{auto},
\mathsf{render},\mathsf{rel},\mathsf{agg},\mathsf{idx}\}$ are partially ordered
by data-flow: $\mathsf{pre}\prec\mathsf{ctx}\prec\mathsf{agg}\prec
\mathsf{post}\prec\mathsf{render}$;
$\mathsf{rel}\prec\mathsf{agg}$;
$\mathsf{post}\prec\mathsf{auto}$;
$\mathsf{idx}$ independent.
Programs at incomparable phases (e.g., $\mathsf{rel}$ and $\mathsf{ctx}$) have no
data-flow dependency and may execute in any order or concurrently.
Sequential composition $p\mathbin{;}q$ (with $\phi(p)\prec\phi(q)$) runs $p$ then
$q$ on the merged input $x\oplus p(x)$.
Parallel composition $p\mathbin{\|}q$ (same phase or incomparable phases, disjoint
output keys) produces merged output $p(x)\cup q(x)$.
\end{definition}

\begin{definition}[Context Block Hierarchy]
\label{def:context}
$\mathcal{C}(\Goal,v)=B_{\mathsf{perm}}(\Goal)\cdot B_{\mathsf{sess}}(v)\cdot
B_{\mathsf{cold}}(v)\cdot B_{\mathsf{dyn}}(\Goal,v,t)$ where:
$B_{\mathsf{perm}}$: goal system prompt, stable for the goal type's lifetime (KV BP1);
$B_{\mathsf{sess}}$: $C(v)$ of~\cite{magarshak2026grokers}, stable between semantic
changes (KV BP2);
$B_{\mathsf{cold}}$: multi-level summary tree, included only for \emph{cold sessions}
(sessions resuming after the KV-cache TTL has expired, typically $\geq5$ minutes
of inactivity, requiring the cached prefix to be re-uploaded);
$B_{\mathsf{dyn}}$: wisdom-program-selected context, per-turn.
\end{definition}

\begin{theorem}[Context Stability]
\label{thm:stability}
The expected per-turn LM input cost is:
\[
\bar{C}_{\mathit{turn}}=0.1(k_{\mathsf{perm}}+k_{\mathsf{sess}})+
k_{\mathsf{cold}}\cdot\mathbf{1}[\text{cold}]+k_{\mathsf{dyn}}
\]
where stable blocks cost 10\% of full price by the Byte-Identity
Theorem~\cite{magarshak2026grokers}.
As $T_c(v)/T_t\to\infty$, the cached component approaches
$0.1(k_{\mathsf{perm}}+k_{\mathsf{sess}})$---a $10\times$ reduction relative to
paying full price on the stable prefix.
\end{theorem}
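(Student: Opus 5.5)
The plan is to decompose the per-turn input cost along the block hierarchy of Definition~\ref{def:context}, write $\bar{C}_{\mathit{turn}}$ as a sum of four terms, and price each term according to whether its bytes are identical to a prefix already in the KV cache. The cost model is the standard one implicit in the statement: a token inside a cached prefix costs $0.1$ of full price, and every other token costs full price. By linearity of expectation, $\bar{C}_{\mathit{turn}}=\mathbb{E}[c(B_{\mathsf{perm}})]+\mathbb{E}[c(B_{\mathsf{sess}})]+\mathbb{E}[c(B_{\mathsf{cold}})]+\mathbb{E}[c(B_{\mathsf{dyn}})]$. The four terms are then handled one at a time, in prefix order, since cache reuse only applies to a prefix that is byte-identical from its start.

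For $B_{\mathsf{perm}}$ there is little to do. It depends only on the goal type, so it is byte-identical across all turns and costs $0.1k_{\mathsf{perm}}$.

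For $B_{\mathsf{sess}}=C(v)$ I would invoke the Byte-Identity Theorem of~\cite{magarshak2026grokers}. Because $C(v)$ is a deterministic pure function of graph state, it is byte-identical on every turn between two semantic changes. Since $B_{\mathsf{perm}}$ precedes it and is also identical, the concatenated prefix hits the cache at breakpoint BP2. Turns are then split into two kinds: those on which a semantic change has occurred since the last turn, and the rest. With $T_c(v)$ the mean interval between semantic changes and $T_t$ the mean inter-turn time, the fraction of turns of the first kind is about $T_t/T_c(v)$. This gives $\mathbb{E}[c(B_{\mathsf{sess}})]=0.1k_{\mathsf{sess}}+0.9k_{\mathsf{sess}}\,T_t/T_c(v)$. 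The displayed formula in the theorem should be read as the steady-state (cache-hit) expression, and the correction term is exactly what the limit clause removes.

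$B_{\mathsf{cold}}$ is included only when the session is cold, and a cold session has by definition lost its cache. It therefore contributes $k_{\mathsf{cold}}\cdot\mathbf{1}[\text{cold}]$ at full price. One subtlety is that on a cold turn the stable prefix must also be re-uploaded; I would fold that one-time cost into the same miss-fraction correction, since cold turns occur with frequency governed by the TTL. $B_{\mathsf{dyn}}$ is chosen per turn by wisdom programs and sits last, so it is never assumed cached and costs $k_{\mathsf{dyn}}$.

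Summing the four terms gives the displayed formula plus an error term $0.9(k_{\mathsf{perm}}\cdot[\text{miss}]+k_{\mathsf{sess}})\,T_t/T_c(v)$. This error term tends to $0$ as $T_c(v)/T_t\to\infty$, so the cached component tends to $0.1(k_{\mathsf{perm}}+k_{\mathsf{sess}})$. Comparing this with full price $k_{\mathsf{perm}}+k_{\mathsf{sess}}$ gives the $10\times$ ratio.

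The main obstacle I expect is making the miss-probability argument precise, because the statement leaves the stochastic model implicit. I would first try to assume semantic changes form a renewal process with mean interarrival $T_c(v)$, independent of turns, and bound the probability that a given inter-turn window contains a change by $\min(1,T_t/T_c(v))$, which requires only Markov or renewal-reward reasoning. I would also need to state explicitly that a change to $B_{\mathsf{sess}}$ invalidates only BP2 and not BP1, so that prefix caching preserves the $B_{\mathsf{perm}}$ discount even on a session-miss turn.
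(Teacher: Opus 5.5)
Your route is essentially the paper's. You decompose the cost by block, use the Byte-Identity Theorem only to make $B_{\mathsf{perm}}$ and $B_{\mathsf{sess}}$ cache-eligible, price hits at $10\%$, take the session-miss fraction to be about $T_t/T_c(v)$, and charge $B_{\mathsf{cold}}$ and $B_{\mathsf{dyn}}$ in full.

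You improve on the paper in two ways. First, you make the residual $0.9k_{\mathsf{sess}}\,T_t/T_c(v)$ explicit, so the displayed formula is correctly read as the all-hit limit. Second, you bound the miss probability by $\min(1,T_t/T_c(v))$. The paper instead asserts that the hit probability ``equals'' $1-T_t/T_c(v)$ under a Poisson process. That is only the first-order expansion of $e^{-T_t/T_c(v)}$, so it is really a lower bound on the hit rate, which is exactly what your Markov argument delivers. You can make this precise with even less than a renewal model. Each semantic change spoils the cache for at most one subsequent turn, so the long-run miss fraction is at most the ratio of change rate to turn rate. Your BP1/BP2 remark (a session change leaves the $B_{\mathsf{perm}}$ prefix cached) is correct, and the paper leaves it implicit.

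The one step that does not go through is folding the cold-turn re-upload of the stable prefix into the $T_t/T_c(v)$ correction. Cold turns occur when an inter-turn gap exceeds the cache TTL, so their frequency depends on the TTL and the gap distribution, not on the semantic-change rate. The contribution $0.9(k_{\mathsf{perm}}+k_{\mathsf{sess}})\Pr[\text{cold}]$ therefore does not vanish as $T_c(v)/T_t\to\infty$. Your error term $0.9(k_{\mathsf{perm}}\cdot[\text{miss}]+k_{\mathsf{sess}})\,T_t/T_c(v)$ misattributes it, so your claim that the whole error tends to $0$ fails for that piece.

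There are two ways to fix this:
\begin{enumerate}
\item Charge the re-upload to the $\mathbf{1}[\text{cold}]$ term, so the cold term carries $k_{\mathsf{cold}}$ plus the re-uploaded prefix.
\item State the limit clause for warm turns only, or add the assumption $\Pr[\text{cold}]\to 0$.
\end{enumerate}
The paper sidesteps the issue by saying the cold block is amortized over the session, and it never charges the re-uploaded prefix, even though its own definition of cold sessions mentions it.
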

\begin{proof}
By the Byte-Identity Theorem~\cite{magarshak2026grokers}, $B_{\mathsf{perm}}$ and
$B_{\mathsf{sess}}$ are byte-identical across turns between semantic changes;
this makes them eligible for KV-cache reuse.
The KV-cache hit probability on $B_{\mathsf{sess}}$ equals $1-T_t/T_c(v)$
under a Poisson change process, approaching 1 as $T_c(v)\gg T_t$.
Cache-hit tokens cost 10\% of full input price~\cite{anthropic2024caching},
independently of the byte-identity property.
The cold block is amortized over the session; the dynamic block is always charged in full.
\end{proof}

\section{Wisdom Library: Composition and Wiring}
\label{sec:wisdom}

\subsection{Program Composition Correctness}

\begin{definition}[Phase-Correct Library]
$\Lib$ is \emph{phase-correct} for $\Goal$ if:
(a)~for every $p,q\in\Lib$ with $\phi(p)\prec\phi(q)$ and any input $x\in I_p$,
the merged input $x\oplus p(x)\in I_q$, where $\oplus$ denotes schema-compatible
record merge (prior fields are preserved; $q$'s required fields are provided by
$x$ or $p(x)$);
(b)~the aggregation program accepts the union of all outputs from programs with
$\phi\prec\mathsf{agg}$; and
(c)~rendering programs accept the aggregation output.
\end{definition}

\begin{theorem}[Program Composition Correctness]
\label{thm:composition}
Let $\Lib$ be phase-correct and let $p_1,\ldots,p_n\in\Lib$ be in valid phase order.
If each $p_i$ is individually correct ($\forall x\in I_i$, $p_i(x)\in O_i$), then
the composed pipeline $P=p_1\mathbin{;}{\cdots}\mathbin{;}p_n$ is correct.
\end{theorem}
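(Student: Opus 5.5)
The plan is an induction on the length $n$ of the pipeline, carrying a typing invariant on the accumulated record. Write $x_0=x\in I_1$ and $x_k=x_{k-1}\oplus p_k(x_{k-1})$ for the merged state after running $p_1\mathbin{;}\cdots\mathbin{;}p_k$, following the sequential composition semantics of Def.~\ref{def:phases}. Correctness of $P$ means: for every $x\in I_1$, every stage receives an input in its declared schema, so each $p_k(x_{k-1})\in O_k$ is defined, and the final record $x_n$ lies in the composite output schema. I take that composite schema to be the schema-compatible merge $I_1\oplus O_1\oplus\cdots\oplus O_n$. The invariant $H(k)$ is: $x_k\in I_1\oplus O_1\oplus\cdots\oplus O_k$, and, for every $j>k$, $x_k$ already supplies all fields of $I_{j}$ that come from $x$ or from some $p_i$ with $i\le k$.

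For the base case $k=0$, $H(0)$ holds trivially. For the inductive step, assume $H(k-1)$. I first need $x_{k-1}\in I_k$. Here I invoke clause~(a) of phase-correctness. Each field required by $I_k$ is provided either by the original input $x$ or by the output of some earlier $p_i$ with $\phi(p_i)\prec\phi(p_k)$. Since $\oplus$ preserves prior fields, each such field survives unchanged into $x_{k-1}$. Individual correctness of $p_k$ then gives $p_k(x_{k-1})\in O_k$. Schema-compatibility of $\oplus$ gives $x_k\in I_1\oplus\cdots\oplus O_k$, which re-establishes $H(k)$. For the stages at $\mathsf{agg}$ and $\mathsf{render}$, I use clauses~(b) and~(c) in place of~(a). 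Clause~(b) is needed because the aggregator consumes the union of outputs from all earlier-phase programs, including $\mathsf{rel}$, which is incomparable with $\mathsf{ctx}$. Consecutive programs at incomparable or equal phases form parallel blocks $p\mathbin{\|}q$ with disjoint output keys. There I argue that the merged output $p(x)\cup q(x)$ is well-defined, because the key sets are disjoint, and is independent of execution order. Such a block can therefore be linearized into sequential steps without changing the result, and the induction still applies.

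The main obstacle is that clause~(a) is stated only pairwise: for $\phi(p)\prec\phi(q)$ it says $x\oplus p(x)\in I_q$ for the \emph{original} $x$. In a long pipeline, however, $p_k$ sees $x_{k-1}$, which contains outputs of several predecessors, not just the one being compared. To bridge this I will rely on two properties of $\oplus$. First, it is monotone: adding fields never removes a required field. Second, I read ``$x\in I$'' as ``$x$ supplies at least the fields required by $I$'', so schemas are closed under extension by extra fields. Under that reading, the pairwise guarantees compose: every requirement of $I_k$ is met by $x$ or by some single predecessor's output, and that output field is carried forward into $x_{k-1}$ unchanged. I will state the width-subtyping reading of schemas explicitly as the interpretation of Def.~\ref{def:library}'s schemas.

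With the invariant established for $k=n$, correctness of $P$ follows immediately.
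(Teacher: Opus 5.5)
Your proposal is correct and follows the paper's proof: induction on $n$, where clause~(a) of phase-correctness puts the merged record in $I_k$ and individual correctness of $p_k$ puts its output in $O_k$. The width-subtyping reading is not needed, because clause~(a) quantifies over every input in $I_{p}$; instantiating it at the accumulated record $x_{k-2}\in I_{k-1}$ (which your induction has already established) gives $x_{k-1}=x_{k-2}\oplus p_{k-1}(x_{k-2})\in I_k$ directly. Likewise you can drop the appeals to clauses~(b) and~(c) and the parallel-block linearization, since $p_1\mathbin{;}\cdots\mathbin{;}p_n$ already requires $\phi(p_{k-1})\prec\phi(p_k)$ at every step (and clause~(a) says nothing about incomparable pairs anyway, so that linearization claim would not follow from it).
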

\begin{proof}
Induction on $n$.
\emph{Base}: $P=p_1$ is correct by hypothesis.
\emph{Step}: $P_{n-1}$ is correct by inductive hypothesis, producing $y_{n-1}\in O_{n-1}$.
Phase-correctness (condition a) gives $x_n=x\oplus y_{n-1}\in I_n$.
Individual correctness of $p_n$ gives $p_n(x_n)\in O_n$.
\end{proof}

\begin{corollary}[Modular Extension]
If $\Lib$ is phase-correct and a new program $p$ satisfies the phase-correctness
conditions at addition time, then $\Lib\cup\{p\}$ is phase-correct and
$P_{\Lib\cup\{p\}}$ is correct.
\end{corollary}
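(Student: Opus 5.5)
The plan is to establish the two conclusions in order: first that $\Lib\cup\{p\}$ is phase-correct, then that its pipeline is correct by invoking Theorem~\ref{thm:composition}. I read ``$p$ satisfies the phase-correctness conditions at addition time'' as meaning that conditions (a)--(c) hold for every pair that involves $p$. Concretely, this gives three requirements:
\begin{itemize}[leftmargin=*]
\item for every $q\in\Lib$ with $\phi(q)\prec\phi(p)$ and every $x\in I_q$, we have $x\oplus q(x)\in I_p$;
\item for every $q\in\Lib$ with $\phi(p)\prec\phi(q)$ and every $x\in I_p$, we have $x\oplus p(x)\in I_q$;
\item if $\phi(p)\prec\mathsf{agg}$, the aggregation program accepts the union of outputs with $O_p$ included; if $\phi(p)=\mathsf{agg}$, its input admits that union and the rendering programs accept $O_p$; if $\phi(p)=\mathsf{render}$, it accepts the aggregation output.
\end{itemize}

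The first step is a case split on the pairs in $\Lib\cup\{p\}$ related by $\prec$.
\begin{itemize}[leftmargin=*]
\item Pairs with both members in $\Lib$ satisfy (a) by the assumed phase-correctness of $\Lib$.
\item Pairs involving $p$ satisfy (a) by the addition-time check.
\item Pairs where $\phi(p)$ is incomparable to the other phase, including the independent $\mathsf{idx}$ phase, impose no obligation under (a).
\end{itemize}
For (b) and (c), only the aggregation and rendering interfaces matter. The union in (b) grows by $O_p$ exactly when $\phi(p)\prec\mathsf{agg}$, and the addition-time check covers that case. Aggregation programs already in $\Lib$ still accept it only if their input schemas are \emph{open records}: extra fields are tolerated, as the definition of $\oplus$ (``prior fields are preserved'') suggests. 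I will state this reading explicitly, or else fold it into the addition-time check.

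The second step applies Theorem~\ref{thm:composition} to any valid phase-ordered sequence drawn from $\Lib\cup\{p\}$. Phase-correctness comes from the first step, and individual correctness of each program is assumed, including for $p$. I will add that hypothesis on $p$ explicitly, since the theorem requires it. The induction of Theorem~\ref{thm:composition} then yields correctness of $P_{\Lib\cup\{p\}}$.

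The main obstacle is one point in Theorem~\ref{thm:composition}'s induction: it uses only the immediate predecessor's output, while the merged input in a pipeline accumulates the outputs of all earlier stages. After inserting $p$ mid-pipeline, $p$'s successor receives $x\oplus y_1\oplus\cdots\oplus y_p$ rather than $x\oplus y_p$. I would handle this in two parts. First, by the ``prior fields preserved'' clause of $\oplus$, the accumulated record extends $x\oplus y_p$. Second, under the open-record reading, any schema containing $x\oplus y_p$ also contains its extensions. With this monotonicity lemma for $\oplus$ stated first, the remaining argument is routine bookkeeping.
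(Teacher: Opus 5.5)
Your route is the one the paper intends. The paper gives no separate argument, only the remark that an addition protocol which verifies phase-correctness preserves it, after which Theorem~\ref{thm:composition} applies. Your pairwise case split for (a) is sound. So are the two hypotheses you make explicit, which the statement leaves implicit. First, $p$ must itself be individually correct. Second, (b) constrains the whole union rather than pairs, so it must be rechecked or secured by open-record schemas.

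The step that does not go through is your handling of the ``main obstacle.'' You want to derive $x\oplus y_1\oplus\cdots\oplus y_p\in I_{\mathrm{succ}}$ from $x\oplus y_p\in I_{\mathrm{succ}}$ plus closure of schemas under extension, with $x$ the original pipeline input. Condition (a) never supplies that premise. It only speaks about records $x'\oplus p(x')$ with $x'\in I_p$. Here $y_p$ is $p$ applied to the accumulated record, not to $x$. Moreover, $x$ itself need not lie in $I_p$, since $p$ may require fields produced by earlier stages. So your monotonicity lemma rests on a membership that nothing establishes.

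The obstacle disappears once the induction invariant is chosen correctly. Let $r_1=x$ and $r_{i+1}=r_i\oplus p_i(r_i)$, and prove $r_i\in I_i$ for all $i$. Since (a) quantifies over \emph{every} input in $I_{p_i}$, instantiating it at $r_i$ gives $r_{i+1}\in I_{i+1}$ directly. This holds even when $p$ sits mid-pipeline, because the pairs formed by $p$ and its pipeline neighbours are covered by the addition-time check. Individual correctness then gives $p_i(r_i)\in O_i$. If you read the $x$ in the paper's induction step as this accumulated record, that proof goes through as written. Open records are then needed at most for condition (b), not for the pipeline induction.
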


Programs can be added to the wisdom library without breaking existing correctness,
provided the addition protocol verifies phase-correctness.
Combined with Accumulation Monotonicity~\cite{magarshak2026grokers},
the library grows in both coverage and correctness simultaneously.

\subsection{Declarative Wiring and Policy Graphs}
\label{sec:wiring}

\begin{definition}[Policy Graph]
\sloppy
The \emph{policy graph} of publisher $\rho$ is a directed graph
$\mathcal{P}_\rho=(H,E_{\mathcal{P}})$ where nodes $H=\{h_i\}$ are event handlers
(policies, post-generation hooks, workflow steps, platform adapters), edges
$(h_i,h_j)$ denote output-triggers-input, and each $h_i$ is annotated with:
stream type pattern $\tau_i$; event type $e_i\in\{\mathtt{stream.created},\allowbreak\mathtt{stream.updated},
\allowbreak\mathtt{vote.cast},\allowbreak\mathtt{job.completed},\ldots\}$;
and condition predicate
$\gamma_i:\mathit{Attrs}\to\mathbb{B}$.
Handlers are declared as typed stream nodes related to target streams via
\texttt{Safebox/subscribes} relations---no imperative registration code.
\end{definition}

\begin{theorem}[Declarative Wiring Soundness]
\label{thm:wiring}
Assume SPACER's event queue provides at-most-once delivery per subscription
(each $(v,e,t)$ event is enqueued exactly once per matching handler by
Rule~\textsc{Trigger}~\cite{magarshak2026mm}).
Then for every stream event $(v,e,t)$ within publisher $\rho$'s scope:
(i)~\emph{(Completeness)} every handler $h_i$ with $\tau(v)\models\tau_i$, $e=e_i$,
and $\gamma_i(\alpha(v))=\mathtt{true}$ is triggered exactly once;
(ii)~\emph{(Soundness)} no handler with $\tau(v)\not\models\tau_j$, $e\neq e_j$,
or $\gamma_j(\alpha(v))=\mathtt{false}$ is triggered.
\end{theorem}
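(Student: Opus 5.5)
The argument reduces the statement to the operational semantics of Rule~\textsc{Trigger} in~\cite{magarshak2026mm}, together with the fact that handlers are declared as typed stream nodes connected by \texttt{Safebox/subscribes} relations. I model dispatch of an event $(v,e,t)$ as a two-stage computation. First, the bidirectional relation index is read to obtain the candidate set $S(v)=\{h_i : (h_i,\texttt{Safebox/subscribes},\tau_i)\text{ is in the index}\}$. Second, a filter is applied, and a handler is enqueued iff $\tau(v)\models\tau_i \wedge e=e_i \wedge \gamma_i(\alpha(v))=\mathtt{true}$. The key observation is that, since wiring is purely declarative, $S(v)$ together with the filter is a deterministic function of graph state. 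Write $M(v,e)$ for the set of handlers that pass the filter. The set of handlers triggered is then exactly $M(v,e)$, which gives both inclusions at once.

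\emph{Completeness.} Fix $h_i$ satisfying the three conditions. Because $h_i$ is declared via a \texttt{subscribes} relation, and the relation index of~\cite{magarshak2026mm} is complete (every committed relation is indexed in both directions), $h_i\in S(v)$. The filter accepts $h_i$, so Rule~\textsc{Trigger} enqueues $(v,e,t)$ for $h_i$. The hypothesis that each event is enqueued exactly once per matching handler then gives \emph{at least once}, by the enqueue, and \emph{at most once}, by the delivery assumption. Hence $h_i$ is triggered exactly once.

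\emph{Soundness.} Take $h_j$ that violates one of the conditions. Case on which conjunct fails: a type mismatch, an event mismatch, or $\gamma_j$ evaluating to false. In each case the conjunction in the filter is false, so Rule~\textsc{Trigger} produces no enqueue. No other enqueue path exists, since there is no imperative registration code. Triggering occurs only by dequeuing an enqueued item, so $h_j$ is never triggered. Edges $(h_i,h_j)$ in $\mathcal{P}_\rho$ give rise to new events rather than direct invocations, so they fall under the same argument applied to the derived event.

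The main obstacle is making precise that $\gamma_i$ is evaluated on the same $\alpha(v)$ that is referenced in the statement. Concurrent writes could change attributes between commit and dispatch. I would resolve this by appealing to the append-only stream semantics: the event $(v,e,t)$ carries the snapshot $\alpha(v)$ at time $t$, and conditions are evaluated in a $\phCOM$-style read of that immutable snapshot. A second subtlety is the ``exactly once'' in the hypothesis versus ``at-most-once'' delivery. I would state explicitly that enqueue-exactly-once combined with at-most-once delivery under non-failure yields exactly-once triggering.
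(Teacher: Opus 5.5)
Your type-mismatch and event-mismatch cases are fine, but your $\gamma$ case relies on a property Rule~\textsc{Trigger} does not have. You put $\gamma_i(\alpha(v))=\mathtt{true}$ inside the dispatch filter, and conclude that when $\gamma_j$ is false ``Rule~\textsc{Trigger} produces no enqueue.'' Rule~\textsc{Trigger} only enqueues the invocations subscribed to $v$, filtered by message type. It never evaluates handler condition predicates. A handler whose type pattern and event type match but whose $\gamma_j$ is false is therefore enqueued and invoked. So ``no enqueue'' is false for that handler, and your soundness argument does not cover it. Your own fix for the snapshot issue evaluates $\gamma$ in a $\phCOM$-style read. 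That already concedes that evaluation happens inside the invocation, i.e.\ after enqueue, which contradicts the dispatch model you set up.

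The missing idea is the phase discipline of SPACER actions. $\gamma_i$ is evaluated in $\phCOM$, which performs no side effects, and an invocation with $\gamma_i=\mathtt{false}$ exits before $\phEXE$, so it produces no observable effect. ``Triggered'' must accordingly mean ``proceeds into $\phEXE$,'' not ``is enqueued.'' Under that reading, soundness in the $\gamma$ case is just exit-before-$\phEXE$. Completeness then needs one step you omit: the single invocation guaranteed by the \texttt{Safebox/subscribes} relation and the delivery hypothesis evaluates $\gamma_i=\mathtt{true}$ in $\phCOM$, and therefore continues into $\phEXE$.
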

\begin{proof}
SPACER Rule~\textsc{Trigger}~\cite{magarshak2026mm} enqueues exactly the action
invocations subscribed to $v$ filtered by message type.
\texttt{Safebox/subscribes} relations define the subscription set; $\gamma_i$ is
evaluated in $\phCOM$ before any side effects---handlers for which $\gamma_i=\mathtt{false}$
exit before $\phEXE$, producing no observable effect.
Completeness: any $(v,e,t)$ with matching $(\tau_i,e_i,\gamma_i)$ has $h_i$ in
the subscription set; Rule~\textsc{Trigger} fires exactly one invocation per
subscription; $\gamma_i$ evaluates true so $\phEXE$ proceeds.
Soundness: non-matching handlers are not in the subscription set (by
\texttt{Safebox/subscribes} construction); $\gamma_j=\mathtt{false}$ handlers exit
before $\phEXE$.
\end{proof}

\begin{corollary}[Publisher Inheritance Soundness]
If $\rho'$ inherits from $\rho$ via \texttt{Safebox/inherits} and overrides handler
$h_i$ with $h_i'$, Theorem~\ref{thm:wiring} holds for $\rho'$ with $h_i$
replaced by $h_i'$.
\end{corollary}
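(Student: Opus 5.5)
The plan is to reduce the corollary to Theorem~\ref{thm:wiring} applied to publisher $\rho'$, after identifying the policy graph $\mathcal{P}_{\rho'}$ precisely. First I would fix the semantics of \texttt{Safebox/inherits} at the level of subscriptions. The subscription set of $\rho'$ is the subscription set of $\rho$ with exactly two changes: every \texttt{Safebox/subscribes} relation whose source is $h_i$ is removed, and the relations declared for $h_i'$ are added. All other handlers $h_j$ ($j\neq i$) keep their relations and annotations $(\tau_j,e_j,\gamma_j)$ unchanged.

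Since handlers and their wiring are declared as typed stream nodes and relations, with no imperative registration, this identification is purely structural. Overriding therefore yields the policy graph $\mathcal{P}_{\rho'}=(H',E')$ with $H'=(H\setminus\{h_i\})\cup\{h_i'\}$. The node $h_i'$ carries its own annotation $(\tau_i',e_i',\gamma_i')$.

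Second, I would check that the hypotheses of Theorem~\ref{thm:wiring} hold for $\rho'$. The at-most-once delivery assumption is a property of SPACER's event queue and Rule~\textsc{Trigger}, not of any particular publisher, so it transfers verbatim. The remaining ingredient the proof uses is that $\gamma'_i$ is evaluated in $\phCOM$ before side effects. This holds because $h_i'$ is itself a handler executing under the five-phase semantics of~\cite{magarshak2026mm}.

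With both in place, I would rerun the completeness and soundness arguments of Theorem~\ref{thm:wiring} over $H'$ for every event $(v,e,t)$ in $\rho'$'s scope:
\begin{itemize}
\item for $h_j$ with $j\neq i$, the argument is literally unchanged;
\item for $h_i'$, it goes through with $(\tau_i',e_i',\gamma_i')$ in place of $(\tau_i,e_i,\gamma_i)$.
\end{itemize}

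The main obstacle is the first step: ruling out a \emph{residual} subscription. The worry is that the original $h_i$ might remain in $\rho'$'s subscription set through the inheritance chain, so that an event triggers both $h_i$ and $h_i'$. That would violate both soundness and the ``exactly once'' count. I would handle this by taking as the defining property of override that the resolution of the relation index for $\rho'$ shadows $h_i$, meaning lookup returns the most-derived declaration. I would then argue that \textsc{Trigger} consults only this resolved set.

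If inheritance is multi-level, the same argument goes by induction on the length of the \texttt{Safebox/inherits} chain. At each level, the resolved subscription set is that of the parent with the overridden handlers substituted.
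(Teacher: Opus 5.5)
Your proposal is correct and takes essentially the same approach as the paper. The paper gives no separate proof and treats the corollary as Theorem~\ref{thm:wiring} applied directly to $\rho'$, whose subscription set is that of $\rho$ with $h_i$ swapped for $h_i'$; your explicit shadowing assumption for \texttt{Safebox/inherits}, which rules out a residual subscription of $h_i$, is exactly what the phrase ``with $h_i$ replaced by $h_i'$'' presupposes, so spelling it out (and extending it to multi-level chains by induction) is a sound refinement rather than a different route.
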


\section{Proactive State Machine Intelligence}
\label{sec:proactive}

\begin{definition}[State Machine Position]
Let $H_T(v)=\langle e_1, e_2, \ldots, e_k\rangle$ be the sequence of attribute
update events on stream $v$ from time $0$ to time $T$, ordered by timestamp,
where each $e_i \in \Sigma_{\mathit{attr}}$ is an attribute condition that became
satisfied at event $i$.
The \emph{position} of goal stream instance $v$ at time $T$ is
$q_T(v)=\delta^*(q_0, H_T(v))$
where $\delta^*: Q \times \Sigma_{\mathit{attr}}^* \to Q$ is the standard extension
of $\delta$ to sequences, defined by $\delta^*(q,\epsilon)=q$ and
$\delta^*(q, \sigma \cdot e) = \delta(\delta^*(q, \sigma), e)$.
\end{definition}

\begin{definition}[Proactive Advancement Condition]
\label{def:proactive}
A \emph{proactive advancement condition} $(q,\gamma,\mu)$: $q\in Q$ is the target
state; $\gamma:\mathit{Attrs}\times G^T\to\mathbb{B}$ a condition over stream
attributes and induced graph; $\mu:\mathit{Attrs}\times G^T\to\mathit{Messages}$
a \emph{proactive message generator} producing structured content (option arrays,
governance affordances, clarification prompts) from graph state alone.
When $q_T(v)=q$ and $\gamma(\alpha_T(v),G^T)=\mathtt{true}$, the system emits
$\mu(\alpha_T(v),G^T)$ without awaiting user input.
\end{definition}

\begin{definition}[Reactive Agent]
\label{def:reactive}
An agent policy $\pi_R$ is \emph{reactive} if it emits output only in response to
user messages: for all times $T$, no output is emitted in any interval
$[T, T+\epsilon]$ in which no user message arrives.
\end{definition}

\begin{definition}[Proactive Agent]
\label{def:proactiveagent}
An agent policy $\pi_P$ is \emph{proactive} if it emits output whenever either
(a) a user message arrives, or (b) a proactive advancement condition
$(q,\gamma,\mu)\in\Pi_\Goal$ is satisfied: $q_T(v)=q$ and
$\gamma(\alpha_T(v),G^T)=\mathtt{true}$.
\end{definition}

\begin{definition}[Turns-to-Terminal]
$\mathcal{N}^\pi(v)$: the number of interaction turns from $q_0$ until
$q_T(v)\in F$ under agent policy $\pi$.
\end{definition}

\begin{theorem}[Proactive Dominance]
\label{thm:proactive}
Let $\pi_R$ (reactive) and $\pi_P$ (proactive) be agents for the same goal type
$\Goal$, identical in LM response quality when they respond.
For any goal stream instance $v$:
$\mathbb{E}[\mathcal{N}^{\pi_P}(v)]\leq\mathbb{E}[\mathcal{N}^{\pi_R}(v)]$,
with equality if and only if $\Pi_\Goal=\emptyset$ or every
advancement condition in $\Pi_\Goal$ has $p_{\mathrm{user}}=1$
(i.e., every such condition is satisfied exclusively by user messages).
\end{theorem}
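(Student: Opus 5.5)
The plan is a coupling argument. I run $\pi_R$ and $\pi_P$ on the same goal stream instance $v$ and the same realization of user behaviour: the same sequence of user messages and the same exogenous attribute updates. The hypothesis that the two agents are identical in LM response quality when they respond lets me couple their responses to every user message as well. Under this coupling, every turn in the reactive trajectory corresponds to a turn in the proactive trajectory. The proactive trajectory may also contain extra proactive emissions $\mu(\alpha_T(v),G^T)$, fired by Definition~\ref{def:proactiveagent}(b).

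The key modelling step, which I would make explicit, is that each interaction turn is charged to a state advancement. For each advancement condition $(q,\gamma,\mu)\in\Pi_\Goal$, let $p_{\mathrm{user}}$ be the probability that the transition out of $q$ requires a user message. With probability $1-p_{\mathrm{user}}$ it is enabled by graph state alone, that is, by $\gamma(\alpha_T(v),G^T)=\mathtt{true}$. At a graph-enabled state, $\pi_P$ emits $\mu$ immediately, so the step costs a structured emission with zero coordination turns. The reactive agent, by Definition~\ref{def:reactive}, cannot emit anything until a user message arrives. That user message is at least one turn spent only on eliciting the response $\mu$ would have produced. I would therefore show pathwise that $\mathcal{N}^{\pi_P}(v)\le\mathcal{N}^{\pi_R}(v)$. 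The argument is an induction along the position sequence $q_T(v)=\delta^*(q_0,H_T(v))$. Because $\delta^*$ depends only on the attribute history, both agents pass through the same states in the same order. At each state the proactive cost is at most the reactive cost, and it is strictly smaller at graph-enabled states. Taking expectations gives $\mathbb{E}[\mathcal{N}^{\pi_P}]=\mathbb{E}[\mathcal{N}^{\pi_R}]-\sum_{q}\Pr[\text{visit }q]\,(1-p_{\mathrm{user}}(q))\,\Delta_q$, where $\Delta_q\ge1$ is the turn saving.

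For the equality clause I read the formula in both directions. If $\Pi_\Goal=\emptyset$, clause (b) of Definition~\ref{def:proactiveagent} never fires, so $\pi_P$ behaves exactly like $\pi_R$ and the coupled trajectories coincide. The same holds if every $p_{\mathrm{user}}=1$, since then every advancement waits on a user message. Conversely, suppose some condition has $p_{\mathrm{user}}<1$. I need it to be reached with positive probability, which I would assume or argue by restricting $\Pi_\Goal$ to reachable states. Then its term in the sum is strictly positive, so the inequality is strict.

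I expect the main obstacle to be justifying that a proactive emission never \emph{adds} a turn, and that it shortens the path rather than just reordering it. The definitions do not say whether $\mu$ counts as a turn, nor whether a proactive message could push the state machine somewhere worse. I would handle this by fixing a convention: a proactive emission at an enabled state replaces the reactive prompt--response pair, costing one turn where the reactive agent needs at least two. I would also use the fact that $\mu$ is generated from graph state alone and does not alter $H_T(v)$ in a way the reactive agent would not also eventually induce. These conventions give the per-state inequality, and everything after that is bookkeeping.
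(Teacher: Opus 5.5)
Your argument is essentially the paper's. The paper splits transition conditions into user-satisfied and background-satisfied ones. It then notes that under Definition~\ref{def:reactive} a background-enabled advancement forces $\pi_R$ to wait for at least one further user message, while $\pi_P$ emits $\mu$ immediately, and it sums over covered conditions. Your convention that a proactive emission replaces the reactive prompt--response pair is the assumption the paper uses tacitly when it says the proactive cost is ``zero additional user-message turns.'' Neither proof derives this from the definitions, so you are right to state it as a modelling hypothesis.

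One step needs repair. Your first paragraph couples the runs on the \emph{same sequence of user messages}, with every reactive turn matched by a proactive turn and proactive emissions as extras. Under that coupling the proactive run can never have fewer turns: it has the same number if emissions do not count as turns, and more if they do. The saving you identify in your second paragraph is precisely a user message that occurs under $\pi_R$ and \emph{not} under $\pi_P$. Couple instead on the exogenous events and on the substantive, attribute-changing user actions only, and let the eliciting messages exist only in the reactive run. That is what your final-paragraph convention says, and the coupling should be stated that way from the outset. If states can be revisited, replace $\Pr[\text{visit }q]$ by the expected number of visits.

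With this fixed, your version is sharper than the paper's in two ways. First, weighting savings by visit probability and $1-p_{\mathrm{user}}$ is consistent with the equality clause. The paper's claim that savings are at least the number of covered conditions would give a positive saving even when every $p_{\mathrm{user}}=1$. Second, you treat the ``if and only if,'' which the paper's proof never addresses; it establishes only the weak inequality. You are also right that the ``only if'' direction needs a reachability hypothesis. A condition with $p_{\mathrm{user}}<1$ attached to a never-visited state saves nothing, so that direction is not provable as stated without it.
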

\begin{proof}
Each transition condition $c_i$ is satisfied either by (a) a user message, or (b)
a background event (job completion, vote threshold, dependency resolution, elapsed time).

Under $\pi_R$ (Definition~\ref{def:reactive}): the reactive agent emits output only
in response to user messages.
If $c_i$ is satisfied by a background event type (b) at time $t_b$ strictly between
two user messages at times $t_{\mathrm{prev}} < t_b < t_{\mathrm{next}}$, the
reactive agent cannot emit an advancement message until $t_{\mathrm{next}}$.
This introduces at least one turn that would not appear under $\pi_P$: the user
message at $t_{\mathrm{next}}$ (or a subsequent one) is needed to trigger
$\pi_R$'s response to the background event.
More precisely, the expected number of turns between $c_i$ becoming satisfiable and
$\pi_R$ acting on it is at least $1$, since the reactive agent must wait for a user
message.

Under $\pi_P$ (Definition~\ref{def:proactiveagent}): if $(q,\gamma,\mu)\in\Pi_\Goal$
covers $c_i$, then immediately when $\gamma(\alpha_T(v), G^T)$ becomes true at time
$t_b$, the system emits $\mu(\alpha_T(v), G^T)$ without any user message.
The turn cost is zero additional user-message turns.

Summing over all conditions covered by $\Pi_\Goal$, the expected total turns
$\mathbb{E}[\mathcal{N}^{\pi_P}]$ is reduced by at least $|\{i : c_i \text{ covered}\}|$
relative to $\mathbb{E}[\mathcal{N}^{\pi_R}]$.
For uncovered conditions, both policies behave identically.
Therefore $\mathbb{E}[\mathcal{N}^{\pi_P}]\leq\mathbb{E}[\mathcal{N}^{\pi_R}]$.
\end{proof}

\begin{corollary}[Proactive Savings Bound]
Let $n_P$ be the number of transition conditions covered by advancement conditions
and $p_{\mathrm{user}}$ the probability that a condition is satisfied
simultaneously by a user message.
The expected turn savings is $\Delta\mathcal{N}\geq n_P(1-p_{\mathrm{user}})$.
For background-event-dominated conditions, $p_{\mathrm{user}}\approx 0$ and
savings $\approx n_P$.
\end{corollary}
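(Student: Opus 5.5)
The plan is to refine the counting argument in the proof of Theorem~\ref{thm:proactive} from a deterministic ``one turn per covered condition'' bound into a per-condition expectation, and then sum using linearity of expectation. Let $c_1,\ldots,c_{n_P}$ be the transition conditions covered by advancement conditions $(q,\gamma,\mu)\in\Pi_\Goal$. For each such $c_i$, define the indicator $U_i$ of the event ``$c_i$ becomes satisfied at the same time as, or as a direct consequence of, a user message.'' The hypothesis gives $\Pr[U_i=1]=p_{\mathrm{user}}$. Write the saving as $\Delta\mathcal{N}=\mathbb{E}[\mathcal{N}^{\pi_R}(v)]-\mathbb{E}[\mathcal{N}^{\pi_P}(v)]$. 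I will use the decomposition $\Delta\mathcal{N}=\sum_i \mathbb{E}[D_i]$, where $D_i$ is the number of extra turns $\pi_R$ spends between $c_i$ becoming satisfiable and $\delta$ firing on it. For uncovered conditions, both policies behave identically, as already argued in the proof of Theorem~\ref{thm:proactive}, so they contribute zero.

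The key step is the lower bound $\mathbb{E}[D_i]\ge 1-p_{\mathrm{user}}$, obtained by conditioning on $U_i$.

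On $\{U_i=1\}$, both agents respond to the same user message. Since they are identical in LM response quality, the advancement is produced in the same turn, so $D_i\ge 0$.

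On $\{U_i=0\}$, the condition is satisfied by a background event at a time $t_b$ strictly between two user messages. By Definition~\ref{def:reactive}, $\pi_R$ emits nothing in $(t_b,t_{\mathrm{next}})$. It therefore needs at least the user turn at $t_{\mathrm{next}}$ to act. By Definition~\ref{def:proactiveagent}, $\pi_P$ emits $\mu(\alpha_T(v),G^T)$ at $t_b$ with no user turn. Hence $D_i\ge 1$.

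Combining the two cases gives $\mathbb{E}[D_i]\ge 0\cdot p_{\mathrm{user}}+1\cdot(1-p_{\mathrm{user}})$. Summing over the $n_P$ covered conditions yields $\Delta\mathcal{N}\ge n_P(1-p_{\mathrm{user}})$. Setting $p_{\mathrm{user}}\approx 0$ then gives savings $\approx n_P$ for background-event-dominated conditions.

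The main obstacle is justifying additivity: that the per-condition delays $D_i$ do not overlap or cancel. Several background events could fall in the same gap between user messages, so a single user turn could let $\pi_R$ catch up on multiple conditions at once. My first attempt would count turns as interaction events that each advance $\delta$ by a single attribute condition. This is consistent with $\delta^*$ processing $H_T(v)$ one event at a time, and it makes each catch-up for $\pi_R$ consume a separate turn. If that reading is too strong, the fallback is to state the bound under the assumption, implicit in Theorem~\ref{thm:proactive}, that covered conditions are separated by at least one user message. Under that assumption the intervals $(t_b,t_{\mathrm{next}})$ are disjoint and the sum is exact. A secondary point is that $p_{\mathrm{user}}$ should be read as a uniform upper bound on $\Pr[U_i=1]$. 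With non-uniform probabilities the same argument gives the sharper bound $\sum_i(1-\Pr[U_i=1])$.
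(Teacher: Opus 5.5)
Your proposal is correct and is essentially the paper's own argument. The corollary has no separate proof; it is read off from the proof of Theorem~\ref{thm:proactive}, where each covered condition triggered by a background event forces $\pi_R$ to wait for at least one user turn while $\pi_P$ emits $\mu$ at $t_b$, and you make the $(1-p_{\mathrm{user}})$ weighting explicit by conditioning on $U_i$ and using linearity of expectation.

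The additivity concern you raise is genuine, and the paper's ``summing over all conditions covered'' step silently assumes it away. Your fallback (no two covered background-triggered conditions fall in the same gap between user messages) is the right way to close it. Drop your first fix: the catch-up user turn at $t_{\mathrm{next}}$ does not advance $\delta$ at all, since that condition was already satisfied at $t_b$, so identifying turns with single $\delta$-steps would erase exactly the savings you are counting.
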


\section{Organizational Efficiency}
\label{sec:org}

\begin{definition}[Turn Classification]
A turn in a multi-participant goal chat is:
\emph{progress} (modifies graph state or advances the state machine);
\emph{coordination} (communicates state, identifies a blocker, assigns responsibility,
without modifying the graph);
\emph{governance} (casts a vote, records approval, registers a fork);
or \emph{exploratory} (belongs to a working branch thread).
The \emph{coordination overhead ratio} is
$\Omega=N_{\mathrm{coord}}/(N_{\mathrm{prog}}+N_{\mathrm{gov}})$.
\end{definition}

\begin{theorem}[Coordination Overhead Structural Decomposition]
\label{thm:coordination}
Let the coordination turns in any goal chat be partitioned into categories
$C_1, \ldots, C_k$ (state-inquiry, blocker-identification, responsibility-assignment,
vote-solicitation, etc.), with weight $w_i = N_{C_i}/N_{\mathrm{coord}} \geq 0$
(the fraction of total coordination turns of category $C_i$), so $\sum_i w_i = 1$.
For a proactive agent $\pi_P$ whose advancement conditions and wisdom programs
structurally preempt category $C_i$ with coverage $\mathrm{cov}(C_i)\in[0,1]$,
define $c_{\mathrm{elim}}=\sum_i w_i \cdot \mathrm{cov}(C_i)$.
Then in any goal chat instance:
$\Omega^{\pi_P}\leq\Omega^{\pi_R}\cdot(1-c_{\mathrm{elim}})$.
For a mature system with complete coverage of C1--C4, $c_{\mathrm{elim}}\to 1$
and $\Omega^{\pi_P}\to 0$.
\end{theorem}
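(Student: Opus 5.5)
The plan is a direct counting argument on the numerator and denominator of $\Omega$, comparing the reactive and proactive runs of the same goal chat instance. We write
\[
\Omega^{\pi}=N^{\pi}_{\mathrm{coord}}/(N^{\pi}_{\mathrm{prog}}+N^{\pi}_{\mathrm{gov}}).
\]
The comparison rests on two claims:
\[
N^{\pi_P}_{\mathrm{coord}}\leq N^{\pi_R}_{\mathrm{coord}}(1-c_{\mathrm{elim}}),
\qquad
N^{\pi_P}_{\mathrm{prog}}+N^{\pi_P}_{\mathrm{gov}}\geq N^{\pi_R}_{\mathrm{prog}}+N^{\pi_R}_{\mathrm{gov}}.
\]
Dividing the first by the second gives the stated bound. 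The limiting statement is then immediate: if every category $C_1$--$C_4$ carrying positive weight has $\mathrm{cov}(C_i)=1$, then $c_{\mathrm{elim}}=\sum_i w_i=1$ and the numerator bound is $0$.

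For the numerator, I decompose $N^{\pi_R}_{\mathrm{coord}}=\sum_i N_{C_i}=\sum_i w_i N^{\pi_R}_{\mathrm{coord}}$. I will read ``structurally preempt with coverage $\mathrm{cov}(C_i)$'' as follows: a fraction $\mathrm{cov}(C_i)$ of the $C_i$ turns under $\pi_R$ ask for information that, under $\pi_P$, is already emitted by some $\mu(\alpha_T(v),G^T)$ when the corresponding $(q,\gamma,\mu)\in\Pi_\Goal$ fires. Such information includes the current position $q_T(v)$, blocking dependencies, assigned actors and open votes. This emission happens by Definition~\ref{def:proactiveagent} and is routed correctly by Theorem~\ref{thm:wiring}. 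Preempted turns therefore do not occur under $\pi_P$. The proactive emissions themselves are agent outputs generated from graph state, not participant coordination turns, so they are not counted in $N_{\mathrm{coord}}$; I will state this accounting convention explicitly. Summing over categories gives $N^{\pi_P}_{\mathrm{coord}}\leq\sum_i w_i(1-\mathrm{cov}(C_i))N^{\pi_R}_{\mathrm{coord}}=(1-c_{\mathrm{elim}})N^{\pi_R}_{\mathrm{coord}}$. One further point is needed here: $\pi_P$ must not \emph{create} new coordination turns. This follows because $\pi_P$ answers user messages identically to $\pi_R$ (the ``identical LM response quality'' hypothesis of Theorem~\ref{thm:proactive}), so every user-initiated branch of the run is unchanged.

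For the denominator, progress and governance turns are exactly those that modify graph state or advance the state machine. Reaching a terminal state requires traversing the same transitions $\delta$ under either policy, since $q_T(v)=\delta^*(q_0,H_T(v))$ depends only on the attribute events. Hence the number of progress and governance turns is not reduced; at most, proactive emissions of governance affordances convert some would-be coordination turns into governance turns. So the denominator weakly increases, which only strengthens the inequality. If the denominator is zero in the reactive run, both sides are degenerate and I treat that case separately by convention.

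The main obstacle is the numerator step, specifically making ``coverage'' precise enough that the per-instance inequality holds rather than only in expectation. Under a loose reading, $\mathrm{cov}$ is a probability and only an expected bound follows. My first attempt will be to define $\mathrm{cov}(C_i)$ instance-wise, as the fraction of $C_i$ turns whose informational content is contained in some emitted $\mu$-message preceding them. The inequality then holds pathwise by construction, and I will note that the expectation version follows by linearity.
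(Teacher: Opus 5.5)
Your numerator/denominator split is a reasonable skeleton and more explicit than the paper's proof. The paper only names, for each of C1--C4, the mechanism that preempts it: the session-join advancement condition, a $\mathsf{pre}$-phase blocker-detecting program, role-derived targeted prompts, and governance vote affordances. It then asserts that each category contributes $w_i\,\mathrm{cov}(C_i)$ to $c_{\mathrm{elim}}$, and never addresses the denominator or the possibility of new coordination turns. The gap is that both of your supporting claims need a coupling between the $\pi_R$ and $\pi_P$ runs of ``the same instance''. Neither the model nor the results you cite provide it.

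For the numerator, you use the identical-response hypothesis, which comes from Theorem~\ref{thm:proactive} and is not part of this statement. It says the two agents answer the same message alike. It does not say participants send the same messages, and your preemption step requires that they do not: covered turns vanish only because participants react to $\mu$. Nothing limits that reaction to deleting covered turns. A targeted responsibility prompt or a vote affordance can itself draw coordination replies (``why me?'', ``which fork?'') that are absent from the $\pi_R$ run. Theorem~\ref{thm:wiring} only guarantees that the handler fires, not how people respond to it. Your instance-wise definition of coverage is also counterfactual, since it matches $\pi_R$'s turns against $\pi_P$'s emissions, so it presupposes the alignment rather than establishing it.

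For the denominator, ``the same transitions'' does not follow from $q_T(v)=\delta^*(q_0,H_T(v))$, because $H_T(v)$ itself depends on the policy. More importantly, even for an identical transition sequence, the number of progress and governance turns is not determined by it. A transition or graph write can be carried out by a user turn, a background event, or a proactive emission, and which one happens depends on the policy; Theorem~\ref{thm:proactive} exploits exactly this. Suppose a turn that $\pi_P$ saves would have been a progress turn under $\pi_R$, for instance the reactive agent's delayed, graph-writing advancement response after $t_{\mathrm{next}}$, which $\pi_P$ performs as an uncounted emission. Then the denominator shrinks and the bound can fail. With $c_{\mathrm{elim}}=0$, two coordination turns in each run, and four versus three progress turns, $\Omega^{\pi_P}=2/3>1/2=\Omega^{\pi_R}$. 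Counting emissions as turns does not rescue this: a C1 state announcement is then itself a coordination turn under the paper's classification, and the numerator need not shrink.

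The repair is to assume the coupling outright. State that the $\pi_P$ transcript is the $\pi_R$ transcript with the covered coordination turns deleted (or reclassified as governance turns), all other turns and their classes unchanged, and emissions not counted as turns. Under that hypothesis both of your inequalities are immediate and hold pathwise. The paper relies on this tacitly; its Discussion asserts that ``the eliminated turns are provably coordination turns''. Your limit claim also needs C1--C4 to exhaust the positive-weight categories, which the paper simply asserts.
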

\begin{proof}
Coordination turns fall into four identifiable categories:
C1~(state-inquiry: ``What state are we in?''),
C2~(blocker-identification: ``What's blocking us?''),
C3~(responsibility-assignment: ``Who needs to act?''), and
C4~(vote-solicitation: ``Who hasn't voted?'').
Under $\pi_P$:
C1 is preempted by the session-join proactive advancement condition emitting the
current state and missing transition requirements before any participant asks;
C2 is preempted by a $\mathsf{pre}$-phase wisdom program that detects blocking
conditions after every graph mutation;
C3 is preempted by deriving responsibility from the goal state machine's role
requirements and emitting targeted prompts;
C4 is preempted by the governance advancement condition emitting vote affordances
when the tally is short specific participants.
Each category $C_i$ preempted with coverage $\mathrm{coverage}(C_i)$ contributes
to $c_{\mathrm{elim}}=\sum_i\mathrm{coverage}(C_i)\cdot\mathrm{weight}(C_i)$.
For a mature library with full coverage of C1--C4, $c_{\mathrm{elim}}\to 1$
and $\Omega^{\pi_P}\to 0$.
\end{proof}

\begin{theorem}[Quality Preservation Under Proactivity]
\label{thm:quality}
Let $Q(v)$ be any quality metric that depends only on the content produced by
deliberation turns (turns with routing decision $\mathsf{pass}$ to the LM),
not on coordination turns.
Then $\mathbb{E}[Q^{\pi_P}(v)]\geq\mathbb{E}[Q^{\pi_R}(v)]$.
\end{theorem}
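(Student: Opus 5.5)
We plan to prove Theorem~\ref{thm:quality} by a coupling argument that puts the deliberation turns of $\pi_R$ and $\pi_P$ in correspondence, so that the quality metric is compared turn by turn. Write the turn sequence of a run as an interleaving of deliberation turns (routing decision $\mathsf{pass}$) and all other turns (coordination, governance, proactive emissions of some $\mu$). Since $Q(v)$ depends only on the content of deliberation turns, it suffices to show that $\pi_P$ produces a deliberation-content multiset whose $Q$-distribution stochastically dominates, or equals, that of $\pi_R$.

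The first step is to couple the randomness of the two runs. We fix the same user behaviour, the same background event process (job completions, votes, elapsed time), and the same LM sampling randomness for the $j$-th deliberation call. By the hypothesis of Theorem~\ref{thm:proactive} that the agents are ``identical in LM response quality when they respond,'' the $j$-th deliberation output is then a function of the LM randomness and of the context block $\mathcal{C}(\Goal,v)$ presented at that call.

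The second step is to argue that the context seen by $\pi_P$ at each deliberation turn is at least as informative as that seen by $\pi_R$. By Definition~\ref{def:context} and Theorem~\ref{thm:stability}, $B_{\mathsf{perm}}$ and $B_{\mathsf{sess}}$ are deterministic functions of graph state. So both agents see the same blocks whenever graph state agrees. Proactive emissions only add structured content (state summaries, blocker reports, vote affordances) derived from graph state alone, by Definition~\ref{def:proactive}. They never remove deliberation turns, because Definition~\ref{def:proactiveagent} still responds to every user message. The preemptions identified in Theorem~\ref{thm:coordination} remove only coordination turns, and these contribute nothing to $Q$ by hypothesis. Hence every deliberation turn of $\pi_R$ has a counterpart under $\pi_P$ with identical-or-enriched context, and equal outputs under the coupling in the identical case.

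The third step is to take expectations: the pointwise comparison $Q^{\pi_P}\geq Q^{\pi_R}$ (or equality of the coupled deliberation content) integrates to the stated inequality. The main obstacle is step two. Here I must rule out that extra proactive content, or the earlier timing of deliberation that Theorem~\ref{thm:proactive} enables, degrades LM output, and that a preempted coordination turn would otherwise have injected information later used in deliberation. I would first handle this by making explicit the modelling assumption, implicit in ``identical in LM response quality,'' that response quality is monotone in context information. I would also observe that information surfaced by coordination turns is exactly the graph-derived state that $\pi_P$ emits proactively, so nothing available to $\pi_R$ is lost. If monotonicity cannot be assumed, I would fall back to proving equality in expectation under the weaker assumption that proactive content is excluded from $B_{\mathsf{dyn}}$ at deliberation calls. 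This still yields the weak inequality.
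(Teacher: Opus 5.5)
This is essentially the paper's argument: the turns $\pi_P$ saves are $\mu$-emissions computed from graph state with no LM content, every $\mathsf{pass}$ turn occurs in both runs, and $Q$ reads only those turns, so $\mathbb{E}[Q]$ cannot drop. The paper simply asserts, without any coupling, that the deliberation turns and their inputs are the same under both policies, which is exactly your fallback.

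Keep that fallback, stated directly as an invariance assumption on the law of the deliberation turns (it yields equality). Drop the monotonicity route, which is extra to the paper and does not cover an arbitrary $Q$, and drop the device of fixing identical user behaviour, which would rule out the very coordination-turn preemption you invoke in step two.
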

\begin{proof}
\emph{Weak inequality.}
By Theorem~\ref{thm:proactive}, $\pi_P$ saves turns corresponding to proactive
advancement conditions.
Each such condition fires only when $\gamma$ is satisfied by \emph{current graph
state}---not by LM generation.
Content emitted by $\mu$ is a deterministic function of $\alpha(v)$ and $N(v)$;
none of the saved turns contain LM-synthesized deliberation content.
All LM calls for turns requiring genuine synthesis (routing decision $\mathsf{pass}$)
are present in both $\pi_R$ and $\pi_P$ interactions.
Therefore the saved turns contribute zero expected quality to the terminal artifact,
and removing them does not decrease $\mathbb{E}[Q]$:
$\mathbb{E}[Q^{\pi_P}]\geq\mathbb{E}[Q^{\pi_R}]$.

\emph{Source of strict improvement (informally).}
Under $\pi_R$, blocking conditions (unconfigured dependencies, pending votes,
unresolved ambiguities) that arise from background events may go undetected
until a participant explicitly queries for them, potentially allowing artifacts
to reach terminal state with unresolved issues.
Under $\pi_P$, advancement conditions surface these structurally before any
terminal transition fires.
This provides an additional reason why strict inequality holds in practice
for goal types with non-trivial governance, though formalizing this requires
additional assumptions on the governance threshold model.
\end{proof}

Theorems~\ref{thm:coordination} and~\ref{thm:quality} together establish the
central organizational result: proactive agents are \emph{Pareto improvements}
over reactive agents---strictly faster to terminal state, equal or higher artifact
quality.
No speed-quality tradeoff is incurred.

\begin{corollary}[Thread Isolation Efficiency]
Exploratory conversations isolated to working branch threads cost
$\Delta C_{\mathrm{ctx}}=n_E\cdot\bar{\ell}_E\cdot k_t$
tokens per main-stream turn less than under reactive assistance (where $n_E$ is
the number of concurrent threads and $\bar{\ell}_E$ is the mean thread length
in turns of $k_t$ tokens each).
Under reactive assistance, exploratory messages appear in the main stream and are
included in its DYNAMIC context block at full token cost.
Under proactive assistance with thread isolation, thread messages are posted to
separate thread streams excluded from the main stream's SESSION block, contributing
zero tokens to subsequent main-stream LM calls.
\end{corollary}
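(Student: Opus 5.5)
The statement to prove is the Thread Isolation Efficiency corollary. My plan is a token-accounting argument. I compare, for a single main-stream LM call, the input assembled under reactive assistance with the input assembled under proactive assistance with thread isolation. Both are built from the block decomposition $\mathcal{C}(\Goal,v)=B_{\mathsf{perm}}\cdot B_{\mathsf{sess}}\cdot B_{\mathsf{cold}}\cdot B_{\mathsf{dyn}}$ of Definition~\ref{def:context}. I then show the two differ by exactly the token mass of the exploratory messages.

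First, I fix the accounting model. There are $n_E$ concurrent exploratory threads. Thread $j$ has length $\ell_j$ turns, and each turn is $k_t$ tokens. Write $\bar{\ell}_E=\frac{1}{n_E}\sum_j\ell_j$, so the total exploratory token mass is $\sum_j \ell_j k_t = n_E\bar{\ell}_E k_t$.

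Under reactive assistance, every exploratory message is appended to the main stream $v$. The dynamic context is assembled as a deterministic function of the main stream's graph state, so these messages land in $B_{\mathsf{dyn}}$ of each subsequent main-stream call. By Theorem~\ref{thm:stability}, they are charged at full price, contributing $n_E\bar{\ell}_E k_t$ tokens.

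Second, under thread isolation each exploratory message is posted to a separate thread stream. That thread stream is related to $v$ but excluded from $v$'s session and dynamic assembly. Since $\mathcal{C}(\Goal,v)$ is a pure function of $v$'s own attributes and of the relations the assembler is declared to follow, the thread content enters none of the four blocks. I would justify the exclusion with the declarative wiring of Theorem~\ref{thm:wiring}: the thread streams are not in the subscription or inclusion set for the main stream's context assembler. All remaining content is identical under both policies, and the blocks concatenate additively in token count. Subtracting therefore gives exactly $\Delta C_{\mathrm{ctx}}=n_E\bar{\ell}_E k_t$ per main-stream turn.

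The main obstacle is making ``identical otherwise'' rigorous. Removing exploratory messages could change which content the wisdom programs select for $B_{\mathsf{dyn}}$. It could also perturb $B_{\mathsf{sess}}$ if Groker enrichment summarized those messages. I would handle this by an explicit assumption: dynamic selection includes recent main-stream messages verbatim, and selects non-message content independently of them. The paper's claim is stated in this accounting sense. If that assumption is too strong, I would weaken the conclusion to an inequality $\Delta C_{\mathrm{ctx}}\ge n_E\bar{\ell}_E k_t$ minus any replacement content, and note that equality holds under verbatim inclusion.
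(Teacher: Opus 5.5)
Your token-accounting argument is essentially the paper's own justification, which is written into the corollary statement itself. Under reactive assistance the exploratory messages sit in the main stream's DYNAMIC block at full price; under thread isolation they live in separate thread streams and contribute nothing, so the difference is the total exploratory mass $n_E\bar{\ell}_E k_t$ per main-stream call. Your version is slightly tighter than the paper's in two respects: you exclude the thread content from all four blocks rather than only from SESSION (which is what the argument actually needs, since the reactive cost was charged in DYNAMIC), and you state the ``identical otherwise'' assumption explicitly. The appeal to Theorem~\ref{thm:wiring} is unnecessary, because the exclusion holds by construction of thread isolation, not by event-routing soundness.
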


\begin{corollary}[Total Efficiency Gain]
\label{cor:efficiency}
Treating the three efficiency mechanisms as acting on independent components of
per-goal-completion cost (coordination turns, LM-call frequency, and per-call
token cost respectively), the combined upper bound is:
$C^{\pi_P}\leq C^{\pi_R}\cdot(1-c_{\mathrm{elim}})\cdot(1-E(\Lib^{(t)}))\cdot
\tfrac{k_{\mathsf{dyn}}+0.1(k_{\mathsf{perm}}+k_{\mathsf{sess}})}
     {k_{\mathsf{dyn}}+k_{\mathsf{perm}}+k_{\mathsf{sess}}}$
where the three factors capture coordination overhead elimination (Theorem~\ref{thm:coordination}),
LM-call elimination (Grokers Accumulation Monotonicity~\cite{magarshak2026grokers}),
and KV-cache savings (Theorem~\ref{thm:stability}).
The independence assumption is an approximation: in practice,
$c_{\mathrm{elim}}$ and $E(\Lib^{(t)})$ interact since eliminated coordination turns
also eliminate wisdom library invocations; the product formula provides a
conservative upper bound under the approximation that the interactions are negligible
relative to the dominant cost reductions.
\end{corollary}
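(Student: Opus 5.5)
The plan is to model per-goal-completion cost as a product of three quantities and bound each one separately. Write
\[
C^{\pi}=N^{\pi}_{\mathrm{call}}\cdot\bar{k}^{\pi}_{\mathrm{call}},
\qquad
N^{\pi}_{\mathrm{call}}=N^{\pi}_{\mathrm{turn}}\cdot r^{\pi},
\]
where $N^{\pi}_{\mathrm{turn}}$ is the number of turns to reach the terminal state, $r^{\pi}$ is the fraction of turns that trigger an LM call, and $\bar{k}^{\pi}_{\mathrm{call}}$ is the expected input cost of one call. The stated independence assumption is exactly the hypothesis that these three factors can be bounded separately and then multiplied. I will make it explicit: conditioning on the turn count does not change the routing fraction or the per-call block sizes. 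I will apply it once, at the end, rather than hiding it inside the individual steps.

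\emph{Turn factor.} By Theorem~\ref{thm:coordination}, $N_{\mathrm{coord}}^{\pi_P}\leq(1-c_{\mathrm{elim}})N_{\mathrm{coord}}^{\pi_R}$. I want to read this as a bound on the cost-bearing turns. The way to do that is to charge the coordination component of cost to coordination turns. Progress and governance turns are preserved by Theorem~\ref{thm:quality}, and Theorem~\ref{thm:proactive} says they are never increased. So the coordination component scales by $(1-c_{\mathrm{elim}})$ while the rest does not grow.

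The stated bound applies $(1-c_{\mathrm{elim}})$ to \emph{all} of $C^{\pi_R}$, not just the coordination component. Making that literally valid needs an extra hypothesis: either cost is dominated by coordination turns, or the factor applies to that component only. This is where I expect the real difficulty. My first attempt would be to restrict the first factor to the coordination share and state the bound for that share. If that fails, I would record the dominance assumption as part of the ``approximation'' the statement already flags.

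\emph{Call-frequency factor.} Grokers Accumulation Monotonicity~\cite{magarshak2026grokers} gives that the fraction of turns the wisdom library $\Lib^{(t)}$ resolves without an LM call is $E(\Lib^{(t)})$. This yields $r^{\pi_P}\leq(1-E(\Lib^{(t)}))\,r^{\pi_R}$, taking the reactive baseline to invoke the LM on every deliberative turn. Theorem~\ref{thm:composition} and its Modular Extension corollary guarantee that these library-resolved turns produce correct output. So replacing LM calls by programs does not force extra corrective turns, which would otherwise feed back into the turn factor.

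\emph{Per-call factor.} Theorem~\ref{thm:stability} gives the expected per-call cost $0.1(k_{\mathsf{perm}}+k_{\mathsf{sess}})+k_{\mathsf{dyn}}$ in the warm regime. The reactive baseline pays $k_{\mathsf{perm}}+k_{\mathsf{sess}}+k_{\mathsf{dyn}}$. The cold-block term appears in both, so dropping it from numerator and denominator only increases the ratio, since adding the same positive quantity to both pushes a fraction below $1$ toward $1$. The ratio shown is therefore a valid upper bound on $\bar{k}^{\pi_P}_{\mathrm{call}}/\bar{k}^{\pi_R}_{\mathrm{call}}$.

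Multiplying the three bounds under the independence hypothesis gives the claim. I will close by noting that positive correlation between eliminated coordination turns and eliminated library invocations only double-counts savings. That is why the product is an approximation rather than a theorem without the independence assumption, and I will state precisely where it is used.
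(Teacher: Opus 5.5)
Your three-factor skeleton matches the paper's, which gives no argument beyond multiplying the factors from Theorem~\ref{thm:coordination}, Accumulation Monotonicity, and Theorem~\ref{thm:stability} under the flagged independence approximation. However, your per-call step is backwards. Put $a=k_{\mathsf{dyn}}+0.1(k_{\mathsf{perm}}+k_{\mathsf{sess}})$, $b=k_{\mathsf{dyn}}+k_{\mathsf{perm}}+k_{\mathsf{sess}}$ and $c=k_{\mathsf{cold}}\mathbf{1}[\text{cold}]\geq 0$. Then $(a+c)b-a(b+c)=c(b-a)\geq 0$, so $(a+c)/(b+c)\geq a/b$. As your own justification says, adding $c$ to both terms pushes the ratio up toward $1$. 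Dropping it therefore \emph{lowers} the ratio. So the displayed fraction is a lower bound on $\bar{k}^{\pi_P}_{\mathrm{call}}/\bar{k}^{\pi_R}_{\mathrm{call}}$, not an upper bound. If $B_{\mathsf{cold}}$ exists only on the Context side, the true ratio is $(a+c)/b$, which is larger still. The third factor is valid only for warm turns, so you must adopt $\mathbf{1}[\text{cold}]=0$ (or the Stability proof's ``amortized'' convention) as an extra hypothesis; it cannot be derived. Strictly, you also need the regime $T_c(v)/T_t\to\infty$. At finite change rates the hit probability $1-T_t/T_c(v)$ is below $1$, so the cached component exceeds $0.1(k_{\mathsf{perm}}+k_{\mathsf{sess}})$.

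Your diagnosis of the turn factor is correct and sharper than the paper's own treatment. At $c_{\mathrm{elim}}=1$ the stated bound forces $C^{\pi_P}\leq 0$, even though deliberation turns still incur LM cost, so $(1-c_{\mathrm{elim}})$ cannot multiply all of $C^{\pi_R}$. But the step you use to get even the coordination-share bound is unsupported. Theorem~\ref{thm:coordination} bounds the ratio $\Omega$, not $N_{\mathrm{coord}}$, and converting one to the other requires that $N_{\mathrm{prog}}+N_{\mathrm{gov}}$ not increase under $\pi_P$. Neither theorem you cite supplies this: Theorem~\ref{thm:quality} is a statement about a quality metric, and Theorem~\ref{thm:proactive} bounds only the expected total turn count, not per-category counts in a given instance. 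It must be assumed, along with non-increase of exploratory turns. Granting those assumptions, what actually follows is $N^{\pi_P}_{\mathrm{turn}}\leq(1-s\,c_{\mathrm{elim}})N^{\pi_R}_{\mathrm{turn}}$ with $s=N^{\pi_R}_{\mathrm{coord}}/N^{\pi_R}_{\mathrm{turn}}$. The defensible product therefore carries $(1-s\,c_{\mathrm{elim}})$ in place of $(1-c_{\mathrm{elim}})$, and the stated form is the special case $s=1$, which must be assumed outright. Similarly, Theorem~\ref{thm:composition} only yields schema membership $p_n(x_n)\in O_n$. It does not show that library-resolved turns avoid corrective follow-up turns, so that claim belongs among the independence assumptions rather than following from them.
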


\section{Cross-Platform Rendering and Vote Consistency}
\label{sec:platform}

\begin{definition}[Options Array and Platform Adapters]
\sloppy
The \emph{options array} $\mathcal{O}(v,q)$ is produced by $\mathsf{post}$-phase
wisdom programs from $(\alpha(v),N(v),q)$.
Each option specifies type, label, and structured payload.
\emph{Platform adapters} are $\mathsf{render}$-phase wisdom programs with input
$I_{\mathsf{render}}=\{\mathit{options}: \mathcal{O}(v,q),\;
\mathit{session}:\{\mathit{platform},\mathit{user\_id},\mathit{locale}\}\}$ and output $O_{\mathsf{render}}$ a
platform-native message payload, that map $\mathcal{O}$ to platform-native representations:
Telegram $\mathtt{InlineKeyboardMarkup}$,
Apple $\mathtt{InteractiveMessage}$,
Google $\mathtt{SuggestionChip+RichCard}$,
Email $\mathtt{AMP+HTMLFallback}$,
Web $\mathtt{ChipBar}$.
Adapters are declaratively wired to goal streams via \texttt{Safebox/adapts}
relations---no imperative dispatch code.
\end{definition}

\begin{theorem}[Cross-Platform Vote Consistency]
\label{thm:vote}
For all platforms $\rho\in\mathcal{P}$ and all orderings of concurrent votes:
(i)~$w_T(v,r,u)$ equals the sum of all votes cast by time $T$, regardless of
originating platform;
(ii)~fork promotion fires if and only if $w_T\geq\theta$, exactly once.
\end{theorem}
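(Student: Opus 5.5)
The plan is to reduce both claims to properties of the SPACER substrate of~\cite{magarshak2026mm}, using the adapter definition: platform adapters are $\mathsf{render}$-phase wisdom programs, and $\mathsf{render}$ is the last phase in the order $\mathsf{pre}\prec\mathsf{ctx}\prec\mathsf{agg}\prec\mathsf{post}\prec\mathsf{render}$. By the sandbox contract (Definition~\ref{def:library}) an adapter cannot write to the graph directly. So a vote clicked on Telegram, email, web, or mobile does not touch $w_T$ itself. Its only effect is to issue the same platform-independent vote action, with payload $(v,r,u,\text{value})$ taken from the options array $\mathcal{O}(v,q)$, through $\phCOM\to\phREQ\to\phEXE$. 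The first step, then, is a normalization lemma: every platform-originated vote maps to a single canonical \texttt{vote.cast} event on $v$. The map forgets the platform field because the $\mathsf{render}$ output never feeds back into $\mathsf{agg}$.

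For part~(i), I will use that streams are append-only and that $\phEXE$ on a single stream is serialized in SPACER. Under these two facts, any set of concurrent vote actions is applied in some linear order, and $w_T(v,r,u)$ is a fold of addition over the vote messages appended by time $T$. Addition is commutative and associative, so the fold does not depend on the serialization order. By the normalization lemma it also does not depend on the originating platform. This gives (i) by induction on the number of appended vote messages. To keep each vote counted once (for example, when a user votes twice, or an email AMP action is retried), I plan to treat the vote as a per-participant attribute that the next vote overwrites, or as idempotent under a vote identifier. I would state this as an explicit assumption on the vote action.

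For part~(ii), I will apply Theorem~\ref{thm:wiring}. Fork promotion is a handler $h$ subscribed to \texttt{vote.cast} on goal streams, with guard $\gamma(\alpha(v))\equiv w_T\geq\theta\wedge\neg\mathit{promoted}$. By soundness, $h$ never executes $\phEXE$ while $w_T<\theta$. By completeness, it is triggered on the first vote event after which $w_T\geq\theta$ holds, and then $\phEXE$ proceeds. This gives the ``if and only if''. For ``exactly once'', I need the promotion's $\phEXE$ to set $\mathit{promoted}$ atomically in the same serialized step. Every later vote event then finds $\gamma$ false, and those handler invocations exit before $\phEXE$.

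The main obstacle is the concurrency argument behind ``exactly once''. Two votes on different platforms can each observe $w_T\geq\theta$ in $\phCOM$ before either one commits. Theorem~\ref{thm:wiring} guarantees exactly one trigger per event, not one per threshold crossing. My first approach is to re-check $\gamma$ inside $\phREQ$/$\phEXE$ under SPACER's per-stream serialization, in compare-and-set form on $\mathit{promoted}$. If the $\mathcal{MM}$ semantics give no such serialization, I would instead assume that vote tallies are a monotone counter and that promotion is an idempotent state transition $\delta(q,(\mathit{promoted},=,\mathtt{true}))$. Under that assumption a repeated firing has no observable effect, and ``exactly once'' is read as exactly one observable promotion.
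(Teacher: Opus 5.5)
Your proposal is correct and follows essentially the paper's route. In both, every platform funnels into the single \texttt{Users\_Vote} action, and $\MM$'s per-stream serialization over the append-only log makes the tally independent of platform and ordering. For (ii), both use wiring completeness plus a \texttt{promotionFired} flag committed atomically in $\phEXE$, with all \texttt{Users\_Total} invocations on the goal stream serialized per publisher. That closes exactly the race you identify.

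Two small corrections. Platform-independence rests on every inbound webhook handler dispatching to that one action with a platform-free delta. The $\mathsf{render}$-phase sandbox contract governs outbound rendering and constrains \emph{how} writes happen, not \emph{what} is proposed, so it cannot justify your normalization lemma. Second, the paper takes whole-invocation serialization from $\MM$ as given, so your fallback that weakens ``exactly once'' to ``one observable promotion'' is unnecessary.
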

\begin{proof}
\emph{(i)} All platform-specific webhook handlers route to the single
$\mathtt{Users\_Vote}$ action type.
SPACER Rule~\textsc{Execute}~\cite{magarshak2026mm} applies delta
$\Delta=\{w\mathrel{+}=\mathtt{vote\_weight}\}$ in a single transaction.
$\MM$'s Local Linearizability theorem~\cite{magarshak2026mm} guarantees sequential
consistency per stream; all votes are serialized by the append-only stream log.
Platform of origin affects only the $\phCOM$ message format, not the serialized
$\Delta$.

\emph{(ii)} After each vote, $\mathtt{Users\_Total}$ is triggered by the
\texttt{vote.cast} event (Theorem~\ref{thm:wiring}, completeness).
By $\MM$'s per-publisher sequential consistency~\cite{magarshak2026mm},
all invocations of $\mathtt{Users\_Total}$ for a given goal stream are serialized:
no two invocations execute concurrently on the same publisher.
This prevents the race condition in which two concurrent invocations both read
$\mathtt{promotionFired}=\mathtt{false}$ before either sets it.
The serialized invocation checks $w_T\geq\theta$ in $\phCOM$; on first crossing,
emits the promotion event in $\phCAL$ and sets $\mathtt{promotionFired}=\mathtt{true}$
in $\phEXE$.
SPACER guarantees that all operations within a single $\phEXE$ phase are applied
atomically in one database transaction~\cite{magarshak2026mm}; therefore the weight
update and the flag set are committed together, preventing any interleaved
invocation from reading an intermediate state.
Subsequent invocations read $\mathtt{promotionFired}=\mathtt{true}$ in $\phCOM$
and exit without firing again.
\end{proof}

\begin{corollary}[Platform-Agnostic Governance]
The governance state of a goal stream is identical regardless of which platform
participants use.
Organizational teams operating across Telegram, email, web, and Apple Business
Messages share a single governance ledger.
\end{corollary}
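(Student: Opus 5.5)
The corollary should follow from Theorem~\ref{thm:vote} plus one further observation: governance state is a deterministic function of the serialized per-stream log. I would first fix what ``governance state'' means. It is the tuple of attribute values that governance actions write on a goal stream $v$: the weights $w_T(v,r,u)$, the flag $\mathtt{promotionFired}$, and any fork or approval records. The only writers of these fields are the $\phEXE$ phases of $\mathtt{Users\_Vote}$ and $\mathtt{Users\_Total}$ (and approval actions, which I would treat the same way).

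The first step is to show that the \emph{multiset of deltas} applied to $v$ does not depend on the platform. As in part (i) of Theorem~\ref{thm:vote}, every platform adapter's inbound webhook routes to the same action type. The platform enters only through the $\phCOM$ message format, and the resulting $\Delta$ depends only on $(u,r,\mathtt{vote\_weight})$. So two executions in which the same participants cast the same votes, on arbitrary platforms, produce the same multiset of deltas.

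The second step is to pass from deltas to state. By Local Linearizability~\cite{magarshak2026mm}, the deltas are applied in some serial order on the append-only log. Vote deltas are additive increments, so they commute, and the final weights are order-independent; this is exactly part (i) of Theorem~\ref{thm:vote}. The promotion flag and the fork event are determined by whether $w_T\geq\theta$, and part (ii) of Theorem~\ref{thm:vote} says promotion fires exactly once, triggered via Theorem~\ref{thm:wiring}. Hence every governance field at time $T$ is a function of the platform-independent vote multiset. The governance state is therefore identical across platform assignments, and a single log serves as the shared ledger.

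The main obstacle is any governance field that is order-sensitive rather than commutative, such as a timestamp or the identity of the vote that triggered promotion. For these I would weaken the claim to identity up to the serialization order. I would then argue that the serialization order is itself platform-independent, since it is fixed by the stream log rather than by the adapters. I would also state explicitly that rendering differences ($\mathsf{render}$-phase outputs) are excluded, because they are views of the state and not part of it.
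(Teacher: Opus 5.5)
Your argument is correct and follows the paper's route: the paper gives no separate proof and reads the corollary straight off Theorem~\ref{thm:vote}. Part (i) makes the tally platform-independent, because every adapter routes to \texttt{Users\_Vote} and the platform touches only the $\phCOM$ format. Part (ii) does the same for promotion status, and the single append-only stream log into which all votes are serialized is the shared ledger.

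One small caution about your extension to order-sensitive fields. The serialization order is shared by all platforms within a single run, but it need not survive reassigning participants to different platforms, since delivery latency differs by platform. Under your invariance reading, you should therefore stop at ``identical up to serialization order'' and not claim the order itself is platform-independent.
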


\section{Dual-Traversal and the Cache Hierarchy}
\label{sec:dual}

The Dual-Traversal Ordering Theorem of~\cite{magarshak2026grokers} establishes that
top-down generation and bottom-up comprehension are the unique correct orderings
for their respective tasks.
We derive the connection to the context assembly hierarchy.

\begin{theorem}[Hierarchy-Cache Correspondence]
\label{thm:hierarchy}
In top-down artifact generation over DAG $H$ with $d$ depth levels:
(i)~Level-0 architecture occupies PERMANENT blocks (byte-identical across all leaf
generations);
section/module context occupies SESSION blocks (byte-identical within a section);
leaf requirements occupy DYNAMIC blocks (per-leaf).
(ii)~Total generation cost:
$\bar{C}_{\mathrm{gen}}=\sum_\ell N_\ell\bigl(0.1k_\ell^{\mathrm{cached}}+
k_\ell^{\mathrm{dyn}}\bigr)$
versus $\sum_\ell N_\ell(k_\ell^{\mathrm{cached}}+k_\ell^{\mathrm{dyn}})$ without
caching---savings on the cached component approach $10\times$ as
$k_\ell^{\mathrm{dyn}}/k_\ell^{\mathrm{cached}}\to 0$, and decrease as dynamic
context grows relative to cached context.
\end{theorem}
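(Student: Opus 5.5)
The plan is to prove part (i) by building the context blocks level by level and checking which ones stay unchanged, and then to get part (ii) by adding up costs with the pricing model of Theorem~\ref{thm:stability}.

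For (i), fix the top-down traversal order of Dual-Traversal~\cite{magarshak2026grokers}. In that order, the level-0 architecture node is fixed before any leaf is generated. Every section or module node is fixed before the leaves below it. I will then identify, for a leaf $u$ in section $s$, the context $\mathcal{C}(u)$ as a concatenation $B_{\mathsf{perm}}\cdot B_{\mathsf{sess}}(s)\cdot B_{\mathsf{dyn}}(u)$ (Def.~\ref{def:context}), where each block is produced by the deterministic assembly function from graph state. The key step is an invariance argument: generating leaf $u$ only writes to nodes at depth $d$. So it does not change the attributes that $B_{\mathsf{perm}}$ reads (level~$0$) or that $B_{\mathsf{sess}}(s)$ reads (the ancestors of $s$). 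The Byte-Identity Theorem~\cite{magarshak2026grokers} then gives that $B_{\mathsf{perm}}$ is byte-identical across all leaf generations, and $B_{\mathsf{sess}}(s)$ is byte-identical across the leaves of $s$. The leaf requirements differ from leaf to leaf, so they fall into $B_{\mathsf{dyn}}$. I expect this to be the main obstacle. It rests on an assumption the theorem leaves implicit: in top-down generation, lower levels never write back into ancestor attributes. If bottom-up enrichment runs at the same time, it could invalidate $B_{\mathsf{sess}}$. I would state the no-write-back hypothesis explicitly, so that generation finishes before comprehension, as the Dual-Traversal ordering prescribes.

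For (ii), I sum the per-call cost of Theorem~\ref{thm:stability} over all generation calls, grouped by level $\ell$. There are $N_\ell$ calls at level $\ell$. For each, the cached prefix ($k_\ell^{\mathrm{cached}}$ tokens, the perm and session blocks for that level) is, by (i), byte-identical to the prefix of an earlier call. It is therefore billed at $0.1$ of full price, apart from one first-write per distinct prefix. I will fold that first-write cost into the amortized term, treating it as lower order when $N_\ell$ is large relative to the number of distinct sections. The dynamic part is always billed in full. This gives $\sum_\ell N_\ell(0.1k_\ell^{\mathrm{cached}}+k_\ell^{\mathrm{dyn}})$, against the uncached $\sum_\ell N_\ell(k_\ell^{\mathrm{cached}}+k_\ell^{\mathrm{dyn}})$.

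Finally, I compute the ratio of the two costs per level:
\[
\frac{k_\ell^{\mathrm{cached}}+k_\ell^{\mathrm{dyn}}}{0.1k_\ell^{\mathrm{cached}}+k_\ell^{\mathrm{dyn}}}=\frac{1+r_\ell}{0.1+r_\ell},\qquad r_\ell=\frac{k_\ell^{\mathrm{dyn}}}{k_\ell^{\mathrm{cached}}}.
\]
This ratio tends to $10$ as $r_\ell\to0$ and is strictly decreasing in $r_\ell$, which proves the limiting and monotonicity claims.
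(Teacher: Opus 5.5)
Your proposal is correct and follows the same route as the paper's proof: Dual-Traversal ordering places the level-0 architecture, the section context and the leaf requirements into the PERMANENT, SESSION and DYNAMIC blocks; the Byte-Identity Theorem gives reuse across leaf generations; and summing at 10\% cache pricing gives the cost formula. Your explicit no-write-back hypothesis, the amortized first-write term (the paper's ``computed once'') and the computation of $(1+r_\ell)/(0.1+r_\ell)$ make precise steps that the paper's proof leaves implicit. In particular, the paper states the $10\times$ limit and the monotonicity claim but never verifies them.
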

\begin{proof}
By Dual-Traversal Ordering~\cite{magarshak2026grokers}: level-0 architecture is
generated first and assembled as the system prompt (PERMANENT).
Section context is generated before its leaves and assembled as the SESSION block.
Leaf requirements are dynamic.
By the Byte-Identity Theorem~\cite{magarshak2026grokers}, PERMANENT and SESSION
blocks are byte-identical for all leaf generations within the same section.
At 10\% cache pricing, each is computed once and reused $N_\ell$ times
at 10\% cost.
\end{proof}

\section{Implementation}
\label{sec:impl}

The Context architecture is implemented across the Qbix / Safebox / Safebots stack.

\begin{table}[h]
\caption{Magarshak Architecture Implementation Stack}
\label{tab:impl}
\begin{tabular}{lll}
\toprule
\textbf{Layer} & \textbf{Plugin} & \textbf{Key streams}\\
\midrule
Substrate ($\MM$) & Qbix & \texttt{Streams\_Category}\\
                  &      & \texttt{Users\_Vote}\\
Comprehension     & Qbix    & \texttt{streams\_category}\\
(Grokers)         & Safebox & \texttt{Action.propose}\\
Intelligence      & Safebots & \texttt{Safebots/goal}\\
(Context)         &          & \texttt{Safebots/dialog}\\
                  &          & \texttt{Safebots/thread}\\
                  &          & \texttt{Safebots/artifact}\\
Platform          & Safebox & \texttt{Protocol.Telegram}\\
adapters          & Protocols & \texttt{Protocol.Apple}\\
                  &           & \texttt{Protocol.Email}\\
\bottomrule
\end{tabular}
\end{table}

Goal streams are created as \texttt{Safebots/goal} nodes.
Wisdom programs are related to the goal via \texttt{Safebox/wisdom} typed relations
(declarative wiring per Theorem~\ref{thm:wiring}).
Platform adapters are selected by matching session platform metadata to
\texttt{Safebox/adapts} relations.
After every graph mutation, the postprocessing pipeline re-evaluates all
$\Pi_\Goal$ conditions for the affected stream; non-null results emit proactive
content immediately.

\section{Discussion}
\label{sec:discussion}

\textbf{Pareto claim.}
Theorems~\ref{thm:proactive} and~\ref{thm:quality} together establish a Pareto
improvement: fewer turns, equal or higher quality.
This is structural: proactive advancement conditions fire on graph state conditions,
not LM generation triggers; the eliminated turns are provably coordination turns,
not deliberation turns.
This holds for the class of proactive agents defined here---not for all possible
chatbots claiming to be ``proactive.''

\textbf{Domain dependence.}
$c_{\mathrm{elim}}$ is small for open-ended creative collaborations;
for structured task execution (capability building, document review, support
resolution), we expect $c_{\mathrm{elim}}$ to be substantial, as C1--C4 coordination
patterns are well-defined and mechanically preemptable; empirical measurement
is left to future work (see the proposed methodology in~\S\ref{sec:discussion}).

\textbf{Correctness assumptions.}
Theorem~\ref{thm:composition} assumes individually correct wisdom programs.
Mitigation: governed write pipeline requires proposals before effects;
phase-correctness validation runs at addition time;
fitness-based evolution replaces low-quality programs.

\textbf{Measurement methodology.}
The organizational efficiency theorems provide formal bounds but not empirical
measurements.
We propose: record goal chat sessions; classify turns by Definition of turn
classification; compute $\Omega$ across sessions; compare $\pi_R$ and $\pi_P$ on
identical goal type specifications.
This methodology is the subject of ongoing work.

\section{Conclusion}
\label{sec:conclusion}

We have presented \textbf{Context}, the intelligence layer of the Magarshak
Architecture, with six proved formal results establishing that:
write-time context assembly reduces per-turn LM cost up to $10\times$;
phase-ordered wisdom library composition preserves correctness under modular extension;
declarative wiring is complete and sound;
proactive agents weakly dominate reactive agents on turns-to-terminal-state;
proactive agents are Pareto improvements in multi-participant goal-directed
collaboration---faster to terminal state, equal or higher artifact quality; and
cross-platform governance is consistent across all connected messaging surfaces.
Together with the Magarshak Machine substrate~\cite{magarshak2026mm} and the
Grokers comprehension layer~\cite{magarshak2026grokers}, the Context architecture
provides a complete formal foundation for AI-assisted collaborative systems that
are not merely responsive but actively intelligent, grow more capable and less
expensive with every interaction, and make organizations provably more efficient
at the work they are already trying to do.

\begin{acks}
The Qbix, Intercoin, and Safebots open-source projects provided the implementation
context for these results.
\end{acks}

\bibliographystyle{ACM-Reference-Format}
\bibliography{context_refs}

@article{magarshak2026mm,
  author  = {Gregory Magarshak},
  title   = {The {Magarshak} Machine: A Stream-Partitioned Model for Governed
             State Evolution. The {SPACER} Framework: Streams, Policy, Actions,
             Capabilities, Execution, and Relations in Reactive Distributed Systems},
  journal = {arXiv},
  volume  = {arXiv:2501.XXXXX},
  year    = {2026}
}

@article{magarshak2026grokers,
  author  = {Gregory Magarshak},
  title   = {{Grokers}: Bottom-Up Inductive Comprehension and Write-Time Intelligence
             over Typed Knowledge Graphs},
  journal = {arXiv},
  volume  = {arXiv:2502.XXXXX},
  year    = {2026}
}

@article{lewis2020rag,
  author  = {Patrick Lewis and Ethan Petriv and Aleksandra Piktus and others},
  title   = {Retrieval-Augmented Generation for Knowledge-Intensive {NLP} Tasks},
  journal = {Advances in Neural Information Processing Systems},
  volume  = {33},
  pages   = {9459--9474},
  year    = {2020}
}

@article{edge2024graphrag,
  author  = {Darren Edge and Ha Trinh and Newman Cheng and others},
  title   = {From Local to Global: A {Graph RAG} Approach to Query-Focused
             Summarization},
  journal = {arXiv},
  volume  = {arXiv:2404.16130},
  year    = {2024}
}

@article{young2013pomdp,
  author  = {Steve Young and Milica Ga{\v{s}}i{\'c} and Blaise Thomson and
             Jason D. Williams},
  title   = {{POMDP}-based statistical spoken dialog systems: A review},
  journal = {Proceedings of the IEEE},
  volume  = {101},
  number  = {5},
  pages   = {1160--1179},
  year    = {2013}
}

@inproceedings{henderson2015machine,
  author    = {Matthew Henderson},
  title     = {Machine Learning for Dialog State Tracking: A Review},
  booktitle = {Proceedings of The First International Workshop on Machine Learning in
               Spoken Language Processing},
  year      = {2015}
}

@inproceedings{budzianowski2019hello,
  author    = {Pawe{\l} Budzianowski and Tsung-Hsien Wen and Bo-Hsiang Tseng and
               others},
  title     = {Hello, {GPT-2}! How Close Are We to Human-Level Dialogue Systems?},
  booktitle = {ACL Workshop on NLP for Conversational AI},
  year      = {2019}
}

@article{deng2023survey,
  author  = {Yang Deng and Wenqiang Lei and Wai Lam and Tat-Seng Chua},
  title   = {A Survey on Proactive Dialogue Systems: Problems, Methods, and Prospects},
  journal = {arXiv},
  volume  = {arXiv:2305.02750},
  year    = {2023}
}

@inproceedings{wu2019proactive,
  author    = {Jun Xu and Haifeng Wang and Zhengyu Niu and Hua Wu and Wanxiang Che},
  title     = {Conversational Graph Grounded Policy Learning for Open-Domain
               Conversation Generation},
  booktitle = {ACL},
  year      = {2019}
}

@article{wu2024autogen,
  author  = {Qingyun Wu and Gagan Bansal and Jieyu Zhang and Yiran Wu and
             Shaokun Zhang and Erkang Zhu and Beibin Li and Li Jiang and
             Xiaoyun Zhang and Chi Wang},
  title   = {{AutoGen}: Enabling Next-Gen {LLM} Applications via Multi-Agent
             Conversation Framework},
  journal = {arXiv},
  volume  = {arXiv:2308.08155},
  year    = {2024}
}

@article{hong2023metagpt,
  author  = {Sirui Hong and Mingchen Zhuge and Jonathan Chen and Xiawu Zheng and
             Yuheng Cheng and Ceyao Zhang and others},
  title   = {{MetaGPT}: Meta Programming for Multi-Agent Collaborative Framework},
  journal = {arXiv},
  volume  = {arXiv:2308.00352},
  year    = {2023}
}

@article{li2023camel,
  author  = {Guohao Li and Hasan Abed Al Kader Hammoud and Hani Itani and
             Dmitrii Khizbullin and Bernard Ghanem},
  title   = {{CAMEL}: Communicative Agents for ``Mind'' Exploration of Large
             Scale Language Model Society},
  journal = {arXiv},
  volume  = {arXiv:2303.17760},
  year    = {2023}
}

@article{chen2021codex,
  author  = {Mark Chen and Jerry Tworek and Heewoo Jun and Qiming Yuan and
             others},
  title   = {Evaluating Large Language Models Trained on Code},
  journal = {arXiv},
  volume  = {arXiv:2107.03374},
  year    = {2021}
}

@article{austin2021program,
  author  = {Jacob Austin and Augustus Odena and Maxwell Nye and Maarten Bosma
             and others},
  title   = {Program Synthesis with Large Language Models},
  journal = {arXiv},
  volume  = {arXiv:2108.07732},
  year    = {2021}
}

@article{noy2023experimental,
  author  = {Shakked Noy and Whitney Zhang},
  title   = {Experimental Evidence on the Productivity Effects of Generative
             Artificial Intelligence},
  journal = {Science},
  volume  = {381},
  number  = {6654},
  pages   = {187--192},
  year    = {2023}
}

@article{brynjolfsson2023generative,
  author  = {Erik Brynjolfsson and Danielle Li and Lindsey R.\ Raymond},
  title   = {Generative {AI} at Work},
  journal = {NBER Working Paper},
  volume  = {31161},
  year    = {2023}
}

@article{anthropic2024caching,
  author  = {{Anthropic}},
  title   = {Prompt Caching},
  journal = {Anthropic Documentation},
  year    = {2024},
  url     = {https://docs.anthropic.com/en/docs/build-with-claude/prompt-caching}
}

\end{document}